\pdfoutput=1
\documentclass[onefignum,onetabnum,a4paper]{siamonline190516}
\usepackage[utf8]{inputenc}
\usepackage{t1enc}
\usepackage{amsfonts}
\usepackage{graphicx}
\usepackage{multirow}
\newsiamremark{remark}{Remark}
\headers{Learning via conjugate gradients and depth-varying neural ODEs}%
{G. Baravdish, G. Eilertsen, R. Jaroudi, B.~T. Johansson, L. Mal\'y, and J. Unger}
\title{Learning via nonlinear conjugate gradients and depth-varying neural ODEs}
\author{George Baravdish%
\thanks{Department of Science and Technology, Linköping University, SE-601\,74 Norrköping, Sweden\\
(\email{george.baravdish@liu.se},\hfill 
 \email{gabriel.eilertsen@liu.se},\hfill
 \email{rym.jaroudi@liu.se},\hfill
 \email{tomas.johansson@liu.se},\hfill
 \email{lukas.maly@liu.se},
 \email{jonas.unger@liu.se}).}
\and Gabriel Eilertsen\footnotemark[1]
\and Rym Jaroudi\footnotemark[1]
\and\linebreak B. Tomas Johansson\footnotemark[1]
\and Lukáš Malý\footnotemark[1]
\and Jonas Unger\footnotemark[1]}
\ifpdf
\hypersetup{
  pdftitle={Learning via nonlinear conjugate gradients and depth-varying neural ODEs},
  pdfauthor={G. Baravdish, G. Eilertsen, R. Jaroudi, B.~T. Johansson, L. Malý, and J. Unger}
}
\fi
\newcommand*{\Rbb}{{\mathbb{R}}}
\newcommand*{\Ccal}{{\mathcal{C}}}
\newcommand*{\Ecal}{{\mathcal{E}}}
\newcommand*{\Scal}{{\mathcal{S}}}
\newcommand*{\tran}{{\mathsf{T}}}
\newcommand*{\vek}{\boldsymbol}

\newcommand*{\vx}{\vek{x}}
\newcommand*{\vxi}{\vek{\xi}}
\newcommand*{\vy}{\vek{y}}
\newcommand*{\vz}{\vek{z}}
\newcommand*{\vb}{\vek{b}}
\newcommand*{\eps}{\varepsilon}
\newcommand*{\smax}{\mathop{\stackrel{\mathrm{soft}}{\max}}}
\DeclareMathOperator{\sech}{sech}
\begin{document}
\maketitle

\begin{abstract} 
The inverse problem of supervised reconstruction of depth-variable (time-dependent) parameters in a neural ordinary differential equation (NODE) is considered, that means finding the weights of a residual network with time continuous layers. The NODE is treated as an isolated entity describing the full network as opposed to earlier research, which embedded it between pre- and post-appended layers trained by conventional methods. The proposed parameter reconstruction is done for a general first order differential equation by minimizing a cost functional covering a variety of loss functions and penalty terms. A nonlinear conjugate gradient method (NCG) is derived for the minimization. Mathematical properties are stated for the differential equation and the cost functional. The adjoint problem needed is derived together with a sensitivity problem. The sensitivity problem can estimate changes in the network output under perturbation of the trained parameters. To preserve smoothness during the iterations the Sobolev gradient is calculated and incorporated. As a proof-of-concept, numerical results are included for a NODE and two synthetic datasets, and compared with standard gradient approaches (not based on NODEs). The results show that the proposed method works well for deep learning with infinite numbers of layers, and has built-in stability and smoothness.    
\end{abstract}

\begin{keywords}
Artificial neural networks, conjugate gradient method, deep learning, inverse problems, neural ordinary differential equations, Sobolev gradient
\end{keywords}

\begin{AMS}
68T07, 34H05, 49N45
\end{AMS}

\section{Introduction}
\label{sec:intro}
Machine learning in the form of supervised learning addresses the inverse problem of reconstructing or approximating a function from given data. Deep neural networks have been proven efficient for this purpose, as these can learn complex functions on large quantities of data. Neural networks are a form of parametrized computational architectures propagating information sequentially using linear parametric operators and certain nonlinearities. Research in this direction has been prolific and an overview cannot be given here; an introduction to neural networks with focus on inverse problems is in~\cite{arridge}, see also~\cite{higham}.

Residual Networks \cite{he2016deep,8100117,8099726} (ResNets) enable successful optimization of very deep neural network architectures with hundreds of layers. Its representational power has led to improvements in a range of high performance computer vision applications, such as classification, object detection, segmentation, etc. In the seminal paper~\cite{chen}, it was observed that the structure of a residual network is similar to the Euler discretization of an ordinary differential equation (ODE). This differential equation, termed a neural ordinary differential equation (NODE), takes the form
\begin{equation}
\label{eq:node}
\left\{
\begin{aligned}
   \vx'(t) & = F(t, \vx(t), \theta(t)), \quad t \in I := (0,T)\\
   \vx(0) & = \vx^0
\end{aligned}
\right.
\end{equation}
and corresponds to an infinite number of layers. Here, $F$ can describe activation of neurons in the network and depends on the depth-variable (i.e., time-dependent) parameters $\theta(t) \in \Rbb^M$, $t\in I$. Starting from the input layer $\vx^0$, the output layer $\vx(T)$ can be obtained by solving the ODE~\eqref{eq:node}. There are advantages of this time-continuous formulation. For example, in time-series and medical applications, it is important to have time as a continuous variable to make predictions. Using neural ODEs, it is possible to simulate forward in time as well as reversing to see what initial data that caused an observed effect. Furthermore, from theory for ODEs, qualitative statements about the output and parameter dependence can be derived, as well as enabling a trade-off between precision and speed in an already trained model. The apparent stability of methods based on NODEs is an attractive feature in real world applications, see for example~\cite{Owoyele_Pal}. Further advantages of neural ODEs are listed in~\cite[Secs.~2.1--2.2]{Fompeyrine}. A number of works for NODEs have been presented, we point to some in the remaining sections.

Supervised training entails fitting the output of the neural network to a given set of data points. In the continuous setting, an ODE is learned for function approximation. This means identifying the parameter function $\theta(t)$ representing weight matrices and biases. The output data consists of a (usually) large number of elements $\vy \in \Rbb^N$ and corresponding initial states $\vx^0$ connected by the condition $\vx(T)=\vy$ at some time-point (layer) $T>0$ with $\vx$ satisfying~\eqref{eq:node}. Identifying $\theta$ can be recast as minimizing a so-called cost functional
\begin{equation}
\label{eq:lfunctional}
J(\theta) = 
E(\vx, \vy, \theta)
\end{equation}
subject to $\vx(t)$ solving~\eqref{eq:node} and $\theta$ belonging to a suitable class of vector-valued (or matrix-valued) functions, e.g. $\theta \in \mathcal{C}([0,T]\to \Rbb^{M})$ or a Hilbert function space.
In~\cite{chen}, a numerical method is proposed for finding the parameter $\theta \in \Rbb^M$ introducing an adjoint equation to approximately compute a gradient of a specific cost functional.

A limitation of the original NODE in~\cite{chen} is that it considers a constant $\theta$ that does not vary with depth $t\in (0, T)$. This shortcoming has been pointed out in \cite{dissecting}, which also suggested an improved method to allow for depth-varying parameters $\theta(t)$. Previous work on NODEs have also only considered them in combination with conventional neural networks, where both the input and output of the NODE are processed by layers not within the ODE formulation. Thus, it has not been studied how NODEs perform in isolation. With an independent NODE architecture, the relation between input and output is directly described by the NODE, allowing for mathematical analysis, e.g., sensitivity analysis of network predictions.

We present a novel nonlinear conjugate gradient method (NCG) for minimizing a general cost functional~\eqref{eq:lfunctional} subject to~\eqref{eq:node} to find depth-varying (time-dependent) $\theta$, and consider~\eqref{eq:node} as an isolated network mapping from input to output for a broad class of functions~$F$. The cost functional covers commonly used ones in deep learning, consisting of loss functions and penalty terms. The adjoint and sensitivity problems are derived. The Fr\'{e}chet derivative of the cost functional is calculated in the $L^2$ sense and also with respect to a Sobolev norm, that is the Sobolev gradient is found. The derived framework has some distinct benefits. In particular, the sensitivity problem can be used when analysing error propagation in the learned parameters under changes in for example the initial condition. Moreover, as will be demonstrated in the numerical examples, the Sobolev gradient has a profound effect on the training since it generates smoother parameters, which in turn speed up the numerical solution of the NODEs.

As a proof-of-concept, we implement the proposed method and present numerical results for a NODE and two synthetic datasets. The results show promising behavior and compare well with standard SGD based methods, opening up for future use of isolated NODEs with depth-varying network weights. Employing the Sobolev gradient has a striking effect on the learned parameters, and appears not to have been tested earlier in deep learning. 

The present work builds on recent results~\cite{cao1,cao2} on inverse ill-posed coefficient and source identification problems for parabolic equations, see also~\cite{Alosaimi,Baravdish}. The inverse problems in those works are reformulated as the minimization of cost functionals, and a conjugate gradient method is employed to find a minimum. Note that parabolic equations can be seen as abstract ODEs taking values in suitable function spaces.

To summarize the contributions of the present work, we: 1) derive and analyse a novel method for identifying depth-varying parameters in NODEs by means of conjugate gradients for minimizing a general cost functional, 2) consider the NODE as a separate entity and do not combine it with other traditional architectures 3) calculate the Sobolev gradient of the cost functional and 4) include numerical results showing the actual reconstruction of such parameters on different datasets (in previous work numerical results are typically generated with constant parameters and placing the NODE within a traditional deep learning framework). 

We mention that inverse problems for ODEs have been studied previously, see for example~\cite{kunze,llibre}, but results do not cover the present setting of parameter estimation in models of deep learning. ODEs have also been used in the context of optimization algorithms (see~\cite{Brown} and references therein) but their usage to generate continuous depth-varying algorithms in deep learning is recent.

The paper is organized as follows. In Section~\ref{sec:ODEsolutions}, we recall some facts about the well-posedness of~\eqref{eq:node} for a general function $F$ together with a result on the differentiability of the solution with respect to parameters in the model, see Theorem~\ref{thm:existence_uniqueness} and Theorem~\ref{thm:differentiableDependence}. In Section~\ref{sec:minimization}, we introduce a general cost functional to be minimized, and a nonlinear conjugate gradient method for its minimization. The cost functional covers commonly used loss functions and penalty terms. For example, it can contain Tikhonov penalization of Sobolev type. The required adjoint and sensitivity problems are derived, see Theorem~\ref{thm:Eprim} and Theorem~\ref{thm:steplength}.  To keep smoothness of the updates during the iterations the Sobolev gradient is calculated, and its effect and expression are given in Remark~\ref{rem:smoothness} and~Corollary~\ref{cor:SobolevPairing}. At the end of Section~\ref{sec:minimization}, it is outlined that for certain choices of terms in the cost functional there exists a well-defined minimum and that it is possible by training to get arbitrarily close to the desired target, see Proposition~\ref{pro:minimizer}. Throughout the paper, the stated results are for a broad class of right-hand sides in the ODE~\eqref{eq:node}, and cover the case of NODEs in deep learning when the depth-varying trainable parameter $\theta$ is a vectorization of a weight matrix and a bias vector, and supplied with an activation function (admissible activation functions are discussed at the end of Section~\ref{sec:ODEsolutions}). Numerical results are given in Section~\ref{sec:numresults} for a NODE and two synthetic datasets known as a two moons dataset and two full circles dataset. It is demonstrated that with a rather direct implementation without optimizing the code the obtained results compare well with the ones from standard neural network toolboxes (which are not based on NODEs), and in particular generate stable classifications. The training is also performed using the Sobolev gradient improving smoothness of the sought parameters and speeding up the training. Some conclusions are drawn in Section~\ref{sec:conclusions}.

\section{Solutions and parameter dependence of ordinary differential equations}
\label{sec:ODEsolutions}
We start by recalling the following standard well-posedness result, which generalizes the Picard--Lindel\"of theorem and allow for possible discontinuities in the activation function $F$ along the layers (that is in time), keeping the notation $I=(0,T)$ with $T>0$.
\begin{theorem}[{cf.~\cite[Chap.~3]{Regan}}]
\label{thm:existence_uniqueness}
Consider the initial value problem
\[
\left\{
\begin{aligned}
   \vx'(t) & = F(t, \vx(t)), \quad t \in I,\\
   \vx(0) & = \vx^0
\end{aligned}
\right.
\]
with a general right-hand side $F: I\times \Rbb^N \rightarrow \Rbb^N$. Assume that $F$ is $L^p$-Carath\'{e}odory and $L^p$-Lipschitz in $\vx$, i.e., $F$ is measurable in $t$ and satisfies 
\[
  |F(t,\vx) - F(t,\vz)| \leq \alpha(t) |\vx-\vz|\quad\text{for every }\vx,\vz \in \Rbb^N\text{ and a.e.~}t\in I
\]
with $\alpha \in L^p(I)$. Then, there exists a unique solution $x \in W^{1,p}(I)$ to the given initial value problem.
\end{theorem}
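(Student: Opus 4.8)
The plan is to recast the initial value problem as a fixed-point equation and apply the Banach fixed-point theorem in a suitably weighted norm, so that existence and uniqueness are obtained in one stroke and directly on the whole interval $I$; a short bootstrap then upgrades the regularity of the solution from merely absolutely continuous to $W^{1,p}(I)$.

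First I would use that, since $I=(0,T)$ is a bounded interval, $W^{1,p}(I)$ embeds continuously into $C(\bar I)$, and that a function in $W^{1,p}(I)$ (hence absolutely continuous on $\bar I$) solves the IVP if and only if it satisfies the integral equation
\[
 x(t) = \vx^0 + \int_0^t F(s, x(s))\,ds, \qquad t\in\bar I,
\]
by the fundamental theorem of calculus for absolutely continuous functions. This reduces matters to finding a fixed point of the operator $\Phi$ given by $(\Phi x)(t) := \vx^0 + \int_0^t F(s,x(s))\,ds$. To see that $\Phi$ maps $C(\bar I)$ into itself, note that for continuous, hence bounded, $x$ the superposition $s\mapsto F(s,x(s))$ is measurable by the Carath\'eodory hypotheses and is dominated by $|F(s,0)|+\alpha(s)|x(s)|$, which lies in $L^p(I)\subseteq L^1(I)$ because $\alpha\in L^p(I)$ and $I$ is bounded; hence $\Phi x$ is (absolutely) continuous.

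Next I would equip $C(\bar I)$ with the Bielecki-type weighted norm $\|x\|_\lambda := \sup_{t\in\bar I} e^{-\lambda A(t)}|x(t)|$, where $A(t):=\int_0^t\alpha(s)\,ds$ is nondecreasing with $A(0)=0$, and $\lambda>1$ is fixed; this norm is equivalent to the usual sup-norm, so the space remains complete. Using the $L^p$-Lipschitz bound and the substitution $u=A(s)$,
\[
 |(\Phi x)(t)-(\Phi z)(t)| \le \int_0^t \alpha(s)\,|x(s)-z(s)|\,ds \le \Big(\int_0^t \alpha(s) e^{\lambda A(s)}\,ds\Big)\|x-z\|_\lambda = \frac{e^{\lambda A(t)}-1}{\lambda}\,\|x-z\|_\lambda,
\]
so that $\|\Phi x-\Phi z\|_\lambda \le \lambda^{-1}\|x-z\|_\lambda$, i.e.\ $\Phi$ is a contraction. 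The Banach fixed-point theorem then produces a unique $x\in C(\bar I)$ with $\Phi x=x$; since any $W^{1,p}$ solution is necessarily a fixed point of $\Phi$, this is also the unique solution in that class. Finally, the fixed point obeys $x'=F(\cdot,x(\cdot))$ a.e., and as $x$ is continuous and $F$ is $L^p$-Carath\'eodory the right-hand side is bounded pointwise by an $L^p$ function, hence $x'\in L^p(I)$ and $x\in W^{1,p}(I)$.

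The main point requiring care is the measurability of the superposition $s\mapsto F(s,x(s))$ for merely continuous $x$: this is precisely where the Carath\'eodory structure (measurability in $t$, continuity in $\vx$) is used, via the standard lemma that a Carath\'eodory function composed with a continuous function is measurable, proved by approximating $x$ uniformly by piecewise-constant functions and passing to the a.e.\ limit. Everything else is routine; the weighted-norm device is what removes any smallness restriction on $|I|$ and yields a solution on all of $I$ at once rather than by a continuation argument.
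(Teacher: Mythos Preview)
Your argument is correct. Note, however, that the paper does not actually supply a proof of this theorem: it is stated as a known well-posedness result with a reference to O'Regan's book, so there is no in-paper proof to compare against. Your Bielecki-weighted-norm contraction is one of the standard self-contained routes to Carath\'eodory existence and uniqueness; the cited reference develops fixed-point methods for such problems as well, though often via local existence followed by continuation. The advantage of your version is precisely what you highlight: the weighted norm yields a global contraction on all of $\bar I$ in one step, with no smallness restriction and no patching.

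One small point worth making explicit: when you bound $|F(s,x(s))|\le |F(s,0)|+\alpha(s)|x(s)|$ you are using $F(\cdot,0)\in L^p(I)$, which is not spelled out in the theorem's ``i.e.'' clause but is part of the standard meaning of ``$L^p$-Carath\'eodory'' (an $L^p$ growth bound on $F(\cdot,\vx)$ for $\vx$ in bounded sets). Since the paper invokes the term without fully unpacking it, your reading is the intended one, but it would not hurt to say so.
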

The space $W^{1,p}(I)$ is the Sobolev space of order one, that is functions in $L^p$ having a distributional derivative in $L^p$. Such functions can, by the Sobolev embedding theorem, be redefined on a set of measure zero to be continuous on the interval $I$, and absolutely continuous on any compact subinterval of $I$, with a classical derivative in $L^p(I)$.

Theorem~\ref{thm:existence_uniqueness} yields that deep networks described by NODEs are well-defined and it is possible to run inference for a rather broad class of right-hand sides~$F$. However, higher regularity of~$F$ is required for the analysis of perturbations when training the network to find optimal parameters $\theta(t)$. Our method makes use of the following form of Peano's theorem that establishes differential dependence of a solution to the governing ODE on parameters.
\begin{theorem}[{cf.~\cite[Thm.~V.3.1]{hartman2002ordinary}}]
\label{thm:differentiableDependence}
Given a parameter $s \in \Rbb$, consider the initial value problem
\[
\left\{
\begin{aligned}
   \vx'(t) & = \widetilde{F}(t, \vx(t), s), \quad t \in I,\\
   \vx(0) & = \vx^0,
\end{aligned}
\right.
\]
where $\widetilde{F}: I\times \Rbb^N \times \Rbb \rightarrow \Rbb^N$ is continuous. Assume also that $\widetilde{F}$ is continuously differentiable with respect to $\vx$ and $s$. Then, the unique solution $\vx = \vx(t; s)$ to the given initial value problem is continuously differentiable with respect to the parameter $s$\/~{\rm(}and the initial condition $\vx^0$\/{\rm)}. Moreover, $\vxi(t) := D_s \vx(t; s)$ satisfies the linear equation
\[
\left\{
\begin{aligned}
   \vxi'(t) & = D_{\vx} \widetilde{F}(t, \vx(t; s), s) \vxi(t) + D_s \widetilde{F}(t,\vx(t; s), s), \quad t \in I,\\
   \vxi(0) & = 0.
\end{aligned}
\right.
\]
\end{theorem}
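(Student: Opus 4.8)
The plan is to establish the claim by the classical variation-of-constants argument: differentiate the solution through difference quotients in $s$ and identify the limit with the solution of the asserted linear equation. First I would fix the parameter value $s_0$ at which differentiability is sought, together with a compact subinterval $[0,T']\subset I$ on which $\vx(\cdot;s_0)$ exists. Since $\widetilde{F}$ is continuous and continuously differentiable in $\vx$, it is locally Lipschitz in $\vx$, so standard ODE theory (Picard--Lindel\"of; cf.~Theorem~\ref{thm:existence_uniqueness} applied on compact sets) gives unique solutions; a Gronwall estimate for the difference of two solutions then shows that for $|h|$ small the solution $\vx(\cdot;s_0+h)$ is also defined on $[0,T']$, stays in a fixed compact neighborhood $K$ of the graph of $\vx(\cdot;s_0)$, and converges to $\vx(\cdot;s_0)$ uniformly on $[0,T']$ as $h\to 0$.

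Next I would write down the equation for the scaled difference $\vz_h(t) := \bigl(\vx(t;s_0+h) - \vx(t;s_0)\bigr)/h$. Splitting the increment of $\widetilde{F}$ into a part in which only $\vx$ changes and a part in which only $s$ changes, and applying the fundamental theorem of calculus to each, gives the linear problem $\vz_h' = A_h \vz_h + \vb_h$, $\vz_h(0)=0$, with
\[
A_h(t) = \int_0^1 D_{\vx}\widetilde{F}\bigl(t,\, \vx(t;s_0) + \tau(\vx(t;s_0+h) - \vx(t;s_0)),\, s_0+h\bigr)\, d\tau
\]
and $\vb_h(t) = \int_0^1 D_s\widetilde{F}(t,\, \vx(t;s_0),\, s_0+\tau h)\, d\tau$. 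Because $D_{\vx}\widetilde{F}$ and $D_s\widetilde{F}$ are continuous, they are bounded and uniformly continuous on the compact set $K\times[s_0-1,s_0+1]$; combined with the uniform convergence $\vx(\cdot;s_0+h)\to\vx(\cdot;s_0)$, this yields a bound on $A_h$ uniform in $h$ and the uniform convergences $A_h(t)\to A(t):=D_{\vx}\widetilde{F}(t,\vx(t;s_0),s_0)$ and $\vb_h(t)\to\vb(t):=D_s\widetilde{F}(t,\vx(t;s_0),s_0)$ on $[0,T']$. The limiting linear equation $\vxi'=A\vxi+\vb$, $\vxi(0)=0$, is exactly the asserted variational equation and has a unique solution $\vxi$.

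To conclude that $\vz_h\to\vxi$, I would apply Gronwall's inequality to $\vek{w}_h:=\vz_h-\vxi$, which satisfies $\vek{w}_h(0)=0$ and $\vek{w}_h' = A_h\vek{w}_h + (A_h-A)\vxi + (\vb_h-\vb)$: the inhomogeneity tends to $0$ uniformly while $\sup_h\|A_h\|_\infty<\infty$, so $\|\vek{w}_h\|_\infty\to 0$. Hence $D_s\vx(t;s_0)=\vxi(t)$ exists and solves the stated problem. Continuity of $s\mapsto D_s\vx(\cdot;s)$ then follows because $\vxi(\cdot;s)$ solves a linear ODE whose coefficients $s\mapsto D_{\vx}\widetilde{F}(\cdot,\vx(\cdot;s),s)$ and $s\mapsto D_s\widetilde{F}(\cdot,\vx(\cdot;s),s)$ are continuous (using continuous dependence of $\vx$ on $s$ and continuity of the derivatives of $\widetilde{F}$), and solutions of linear ODEs depend continuously, in the uniform norm, on their coefficients — one further Gronwall estimate. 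Differentiable dependence on $\vx^0$ is entirely analogous, or is absorbed by treating $\vx^0$ as an additional parameter; and since $I=(0,T)$ is open, the conclusions on all of $I$ follow by letting $T'\uparrow T$.

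An alternative, slicker route is the implicit function theorem in Banach spaces: the map $\Phi(\vx,s)(t) = \vx(t) - \vx^0 - \int_0^t \widetilde{F}(\tau,\vx(\tau),s)\,d\tau$ from $\Ccal([0,T']\to\Rbb^N)\times\Rbb$ to $\Ccal([0,T']\to\Rbb^N)$ is continuously differentiable (the substitution operator $\vx\mapsto\widetilde{F}(\cdot,\vx(\cdot),s)$ is $C^1$ precisely because $\widetilde{F}$ has continuous derivatives), it vanishes at $(\vx(\cdot;s),s)$, and $D_{\vx}\Phi$ there is the identity minus a Volterra integral operator, hence invertible; differentiating the identity $\Phi(\vx(\cdot;s),s)=0$ then reproduces the variational equation. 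Either way, I expect the main obstacle to be the same bookkeeping — securing the uniform-in-$h$ bounds and the compactness and continuous-dependence facts that make the passage to the limit (equivalently, the $C^1$-smoothness of the substitution operator) rigorous — rather than any conceptual difficulty.
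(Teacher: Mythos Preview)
The paper does not supply its own proof of this theorem: it is quoted from Hartman (Theorem~V.3.1) and only followed by a remark pointing to weaker hypotheses in the literature. Your difference-quotient argument with the Hadamard integral representation of $A_h$, $\vb_h$ and the final Gronwall step is precisely the classical proof one finds in Hartman, so there is nothing to compare against and your write-up is correct. The implicit-function-theorem alternative you sketch is also standard and equally valid.
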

For weaker assumptions on $\widetilde{F}$ under which the solution $\vx(t; s)$ is differentiable with respect to the parameter $s$, see~\cite{Ursescu} with an overview of results given in~\cite{Khan}.

In our numerical experiments, we will use a special structure of the right-hand side of the NODE~\eqref{eq:node}, namely,
\begin{equation}
\label{eq:F}
  F(t, \vx, \theta) = \sigma(W\vx + \vb),
\end{equation}
with the depth-varying (time-dependent) trainable parameter $\theta \in \Rbb^{N^2 + N}$ being a vectorization of a weight matrix $W \in \Rbb^{N\times N}$ and a bias vector $\vb \in \Rbb^N$. Here, $\sigma$ is an activation function applied component-wise, e.g., ReLU (rectified linear unit) defined by $r \mapsto \max\{0, r\}$. 
It is clear that for $F$ as in \eqref{eq:F},  Theorem~\ref{thm:existence_uniqueness} applies whenever $\sigma$ is Lipschitz continuous (i.e., including ReLU) and $W, \vb \in L^p(I)$. 
The differential dependence on parameters in Theorem~\ref{thm:differentiableDependence} applies to \eqref{eq:F} when the trainable parameters $\theta$ are perturbed by $s\mkern 1mu \eta$ as in $\widetilde{F}(t, \vx, s) = F(t, \vx, \theta(t) + s \eta(t))$, provided that the activation function $\sigma$ is continuously differentiable and the parameter functions $\theta$ and $\eta$ are continuous. Even though this requirement excludes ReLU or Leaky ReLU to be used as $\sigma$, one may use a $\Ccal^1$-smooth regularization instead, e.g., the softplus or ELU (exponential linear unit, with parameter $1$).
Other commonly used activation functions include the sigmoid and the hyperbolic tangent (TanH), both being smoother than ReLU (thus the above results still apply) but having known drawbacks in deep learning neural networks (such as the ``vanishing gradient problem'').

\section{Minimization of the cost functional}
\label{sec:minimization}
Consider a neural network described by the NODE \eqref{eq:node}. Given a batch of $K$ data points, each with input $\vx_k^0 \in \Rbb^N$ and desired output $\vy_k \in \Rbb^N$, we seek a parameter $\theta: [0, T] \to \Rbb^M$ that minimizes the cost functional
\begin{equation}
  \label{eq:E}
  E(\theta) = \frac{1}{K} \sum_{k=1}^K \Biggl(L\bigl(\vx_k(T), \vy_k\bigr) + \int_0^T \ell\bigl(\vx_k(t), \vy_k\bigr)\,dt\Biggr) 
  + \int_0^T \Phi\bigl(\theta(t), \theta'(t)\bigr)\,dt,
\end{equation}
subject to the NODEs
\begin{equation}
\label{eq:kNODE}
	\left\{
	\begin{aligned}
		 \vx_k'(t) & = F(t, \vx_k(t), \theta(t)), \quad t \in I,\\
		 \vx_k(0) & = \vx_k^0
	\end{aligned}
	\right.  
\end{equation}
with $k=1,2,\ldots, K$. Here, $L, \ell: \Rbb^N\times \Rbb^N \to \Rbb^+$ are loss functions that measure the ``distance'' between the desired output $\vy_k$ and $\vx_k(t)$, and possibly the magnitude of $\vx_k(t)$, while $\Phi: \Rbb^M \times \Rbb^M \to \Rbb$ measures the magnitude of the parameters and/or their derivative.
Note that one may very well choose $\ell$ and $\Phi$ to be identically zero provided that $L(\vx, \vy) > 0$ whenever $\vx \ne \vy$. We assume that $L$, $\ell$, and $\Phi$ are differentiable. For stability, the function $\Phi$ can contain a multiplicative factor $\lambda>0$ acting as a regularization parameter.

A standard example of $L$ or $\ell$ is the squared $\ell^2$-distance, i.e., $L(\vx, \vy) = \frac{1}{2} \| \vx - \vy \|_{\ell^2}^2$. Alternatively, if $\vy$ is a probability vector, then $L(\vx, \vy)$ can be defined as cross-entropy after $\vx$ has been normalized to a probability vector using, e.g., softmax. 

The final term in the right-hand side of~\eqref{eq:E} can act as a Tikhonov penalty term. It can be useful in practical applications to not only penalize the parameter values $\theta$ but also the derivative $\theta'$, which motivates allowing $\Phi$ to depend also on $\theta'$.
\subsection*{A nonlinear conjugate gradient method for minimizing the cost functional}
\addcontentsline{toc}{subsection}{A nonlinear conjugate gradient method for minimizing the cost functional}
We propose an iterative method based on the adjoint problem and the conjugate gradient method in order to search for a minimizer $\theta$ of $E(\theta)$ in~\eqref{eq:E}. Given an initial parameter $\theta_0: I \to \Rbb^M$, we find the direction of steepest descent for $E$ via the \emph{adjoint problem} stated in \eqref{eq:AP} below, see Theorem~\ref{thm:Eprim}. The descent direction is thus $\eta_0 = -E'(\theta_0)$. Having solved what is known as a \emph{sensitivity problem} (see \eqref{eq:SP} below), we determine the learning rate $\beta_0>0$ and update the parameter by setting $\theta_1 = \theta_0 + \beta_0 \eta_0$.

In $(j+1)^{\text{st}}$ iteration, the direction of steepest descent is $-E'(\theta_{j})$. The Fletcher--Reeves conjugate gradient coefficient $\gamma_j$ is computed by
\[
    \gamma_j = \frac{\| E'(\theta_{j}) \|^2}{\| E'(\theta_{j-1}) \|^2}
\]
and we set the descent direction to $\eta_j = -E'(\theta_{j}) + \gamma_j \eta_{j-1}$. The norm in the expression for $\gamma_j$ is either the norm of $L^2(I)$ or the Sobolev space $W^{1,2}(I)$; alternative expressions for the conjugate gradient coefficients are listed in~\cite[p.~126]{Andrei}. The learning rate $\beta_j>0$ is then determined via the sensitivity problem \eqref{eq:SP} and we update the sought parameter by setting $\theta_{j+1} = \theta_j + \beta_j \eta_j$.

To calculate the conjugate gradient coefficient $\gamma_j$, we start by finding $E'(\theta)$ in the classical sense, and then extend the result to generate the corresponding Fr\'{e}chet derivative in the function space $L^2(I)$ respective the Sobolev space $W^{1,2}(I)$.
\begin{theorem}
\label{thm:Eprim}
Assume that $L$, $\ell$, and\, $\Phi$ in \eqref{eq:E} as well as $F$ and $\theta$ in \eqref{eq:kNODE} are continuously differentiable. Let $\eta: I \to \Rbb^M$ be continuously differentiable. Then, the directional derivative of the cost functional $E$ in \eqref{eq:E} at $\theta$ along $\eta$ is given by
\begin{align}
 \label{eq:E'}
   E'_\eta (\theta) & = \int_0^T \biggl( D_\theta \Phi\bigl(\theta(t), \theta'(t)\bigr) + \frac{1}{K} \sum_{k=1}^K \lambda_k(t)^\tran D_\theta F\bigl(t, \vx_k(t), \theta(t)\bigr)\biggr) \eta(t)\,dt \\
   \notag
   & \quad + \int_0^T D_{\theta'} \Phi\bigl(\theta(t), \theta'(t)\bigr) \eta'(t)\,dt,
\end{align}
where $\lambda_k: I \to \Rbb^N$ is the solution to the \emph{adjoint problem}
\begin{equation}
\label{eq:AP}
\left\{
\begin{aligned}
\lambda_k'(t) & = - D_{\vx}F(t, \vx_k(t), \theta(t))^\tran \lambda_k(t) - D_{\vx} \ell(\vx_k(t), \vy_k)^\tran, \quad t \in I = (0,T), \\
\lambda_k(T) & = D_{\vx} L(\vx_k(T), \vy_k)^\tran.
\end{aligned}
\right.
\end{equation}

If $\Phi$ does not depend on the second variable, i.e., $\Phi(\theta(t), \theta'(t)) =\tilde{\Phi}(\theta(t))$, then it suffices to assume that $\theta$ and $\eta$ are merely continuous instead of continuously differentiable.
\end{theorem}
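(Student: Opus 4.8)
The plan is to compute the directional derivative $E'_\eta(\theta) = \frac{d}{ds}\big|_{s=0} E(\theta + s\eta)$ directly from the definition, using the chain rule on each term of \eqref{eq:E}, and then to eliminate the unknown sensitivity $\vxi_k := D_s \vx_k$ by introducing the adjoint variable $\lambda_k$. First I would set $\vx_k^s$ to be the solution of \eqref{eq:kNODE} with $\theta$ replaced by $\theta + s\eta$; since $F$, $\theta$, and $\eta$ are continuously differentiable, Theorem~\ref{thm:differentiableDependence} (applied with $\widetilde F(t,\vx,s) = F(t,\vx,\theta(t)+s\eta(t))$) guarantees that $s \mapsto \vx_k^s$ is $\Ccal^1$ and that $\vxi_k(t) := D_s \vx_k^s(t)\big|_{s=0}$ solves the linear variational equation
\begin{equation*}
\left\{
\begin{aligned}
\vxi_k'(t) & = D_{\vx} F(t,\vx_k(t),\theta(t))\,\vxi_k(t) + D_\theta F(t,\vx_k(t),\theta(t))\,\eta(t), \quad t \in I,\\
\vxi_k(0) & = 0.
\end{aligned}
\right.
\end{equation*}
Differentiating $E(\theta+s\eta)$ at $s=0$ then gives, by the chain rule,
\begin{equation*}
E'_\eta(\theta) = \frac{1}{K}\sum_{k=1}^K \Biggl( D_{\vx} L(\vx_k(T),\vy_k)\,\vxi_k(T) + \int_0^T D_{\vx}\ell(\vx_k(t),\vy_k)\,\vxi_k(t)\,dt \Biggr) + \int_0^T \Bigl( D_\theta \Phi(\theta,\theta')\,\eta + D_{\theta'}\Phi(\theta,\theta')\,\eta' \Bigr)\,dt,
\end{equation*}
where differentiation under the integral sign is justified by continuity of the integrands and compactness of $[0,T]$.

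The core of the argument is to rewrite the terms involving $\vxi_k$ so that $\vxi_k$ no longer appears explicitly. For each $k$, take the adjoint $\lambda_k$ solving \eqref{eq:AP} and consider $\frac{d}{dt}\bigl(\lambda_k(t)^\tran \vxi_k(t)\bigr) = \lambda_k'(t)^\tran \vxi_k(t) + \lambda_k(t)^\tran \vxi_k'(t)$. Substituting the ODE for $\lambda_k$ from \eqref{eq:AP} and the variational ODE for $\vxi_k$, the terms containing $D_{\vx}F$ cancel (this is exactly the defining property of the adjoint), leaving
\begin{equation*}
\frac{d}{dt}\bigl(\lambda_k^\tran \vxi_k\bigr) = -D_{\vx}\ell(\vx_k,\vy_k)\,\vxi_k + \lambda_k^\tran D_\theta F(t,\vx_k,\theta)\,\eta.
\end{equation*}
Integrating over $[0,T]$ and using $\vxi_k(0)=0$ together with the terminal condition $\lambda_k(T) = D_{\vx}L(\vx_k(T),\vy_k)^\tran$, I obtain
\begin{equation*}
D_{\vx}L(\vx_k(T),\vy_k)\,\vxi_k(T) + \int_0^T D_{\vx}\ell(\vx_k,\vy_k)\,\vxi_k\,dt = \int_0^T \lambda_k(t)^\tran D_\theta F(t,\vx_k(t),\theta(t))\,\eta(t)\,dt.
\end{equation*}
Inserting this identity into the expression for $E'_\eta(\theta)$, summing over $k$ with the factor $1/K$, and collecting the $\Phi$-terms yields exactly \eqref{eq:E'}.

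The main obstacle is the regularity bookkeeping needed to make each step rigorous: invoking Theorem~\ref{thm:differentiableDependence} requires the perturbed right-hand side to be $\Ccal^1$ in $(\vx,s)$, which is why continuous differentiability of $F$, $\theta$, and $\eta$ is assumed; and justifying differentiation under the integral sign and the integration-by-parts-type manipulation of $\frac{d}{dt}(\lambda_k^\tran\vxi_k)$ requires knowing that $\vxi_k$ and $\lambda_k$ are absolutely continuous with the stated derivatives, which again follows from the $\Ccal^1$ hypotheses via the linear ODE theory underlying Theorems~\ref{thm:existence_uniqueness} and~\ref{thm:differentiableDependence}. For the final sentence of the statement, when $\Phi(\theta,\theta') = \tilde\Phi(\theta(t))$ the term $D_{\theta'}\Phi\,\eta'$ drops out, so $\eta'$ never enters; moreover $D_\theta F(t,\vx_k,\theta)\eta$ and $D_\theta\tilde\Phi(\theta)\eta$ involve only $\theta$ and $\eta$ pointwise, so one only needs $\theta,\eta$ continuous for the integrands to be well-defined and for the variational equation for $\vxi_k$ to have a (Carath\'eodory) solution — here one should note that $D_s\widetilde F = D_\theta F(t,\vx,\theta(t)+s\eta(t))\eta(t)$ is still continuous in $(t,\vx,s)$ when $\theta,\eta$ are merely continuous, so the relevant weaker version of Theorem~\ref{thm:differentiableDependence} (as referenced via \cite{Ursescu,Khan}) still applies. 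I would remark on this briefly rather than reprove it.
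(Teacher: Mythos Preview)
Your proposal is correct and follows essentially the same approach as the paper: both arguments differentiate using the variational equation for $\vxi_k$ from Theorem~\ref{thm:differentiableDependence} and eliminate $\vxi_k$ via the adjoint $\lambda_k$ through an integration-by-parts identity. The only difference is presentational: the paper introduces a Lagrangian $\Ecal(\theta) = E(\theta) + \frac{1}{K}\sum_k \int_0^T \langle \lambda_k, F - \vx_k'\rangle\,dt$ (adding zero), differentiates that, and then chooses $\lambda_k$ to annihilate the $\vxi_k$-terms, whereas you differentiate $E$ directly and then invoke the product-rule identity for $\frac{d}{dt}(\lambda_k^\tran\vxi_k)$ with $\lambda_k$ already given by \eqref{eq:AP}; these are the same computation in a different order.
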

\begin{proof}
Since the element $\vx_k$ satisfies \eqref{eq:kNODE}, we may add a Lagrange relaxation term to $E(\theta)$ and thus define
\begin{align*}
  \Ecal(\theta) &= \frac{1}{K} \sum_{k=1}^K \Biggl(L\bigl(\vx_k(T), \vy_k\bigr) + \int_0^T \ell\bigl(\vx_k(t), \vy_k\bigr)\,dt\Biggr) + \int_0^T \Phi\bigl(\theta(t), \theta'(t)\bigr)\,dt \\
	& \quad + \frac{1}{K} \sum_{k=1}^K \int_0^T \bigl\langle \lambda_k(t), F\bigl(t, \vx_k(t), \theta(t)\bigl) - \vx'_k(t)\bigr\rangle\,dt,
\end{align*}
where $\lambda_k: I \to \Rbb^N$ is yet to be determined. The last integral above contains the standard inner product in $\Rbb^N$ defined by $\langle\vx, \vy \rangle = \vy^\tran \vx$. Given the parameters $\theta, \eta: I \to \Rbb^M$, we shall find the directional derivative
\begin{equation}
\label{eq:dElim}
  \Ecal'_\eta(\theta) = \lim_{\eps \to 0} \frac{\Ecal(\theta + \eps \eta) - \Ecal(\theta)}{\eps}\,.
\end{equation}
Let $\widetilde{F}(t, \vx, \eps) = F(t, \vx, \theta(t) + \eps \eta(t))$. If $\vx_k$ is a solution to \eqref{eq:kNODE}, then it is also a solution to the initial value problem
\[
  \left\{
	\begin{aligned}
	\vx_k'(t) & = \widetilde{F}(t, \vx_k(t), \eps), \quad t \in I,\\
	\vx_k(0) & = \vx_k^0
	\end{aligned}
	\right.
\]
with $\eps = 0$. By Theorem~\ref{thm:differentiableDependence}, $\vx_k$ is differentiable with respect to the parameter $\eps$. In particular, when $\theta$ is perturbed by $\eps \eta$, then $\vx_k = \vx_k(t; \theta + \eps \eta) = \vx_k(t; \theta) + \eps \vxi_k(t) + o(\eps)$,
where $\vxi_k$ satisfies
\[
\left\{
\begin{aligned}
   \vxi_k'(t) & = D_{\vx} \widetilde{F}(t, \vx_k(t; \theta), 0) \vxi_k(t) + D_\eps \widetilde{F}(t,\vx_k(t; \theta), 0), \quad t \in I,\\
   \vxi_k(0) & = 0.
\end{aligned}
\right.
\]
Here, we used the little-o notation, meaning that $\frac{o(\eps)}{\eps} \to 0$ as $\eps \to 0$. Expressed in terms of $F$, the perturbation of the solution $\vxi_k$ satisfies the linear differential equation
\[
\left\{
\begin{aligned}
   \vxi_k'(t) & = D_{\vx} F(t, \vx_k(t; \theta), \theta(t)) \vxi_k(t) + D_\theta F(t,\vx_k(t; \theta), \theta(t)) \eta(t), \quad t \in I,\\
   \vxi_k(0) & = 0.
\end{aligned}
\right.
\]

The difference quotient in \eqref{eq:dElim} can be expressed as
\begin{align*}
  \frac{\Ecal(\theta + \eps \eta) - \Ecal(\theta)}{\eps} & = \frac{1}{K} \sum_{k=1}^K \frac{L(\vx_k(T; \theta) + \eps \vxi_k(T) + o(\eps), \vy_k) - L(\vx_k(T;\theta), \vy_k)}{\eps} \\
	& \quad + \frac{1}{K} \sum_{k=1}^K \int_0^T \frac{\ell(\vx_k(t; \theta) + \eps \vxi_k(t) + o(\eps), \vy_k) - \ell(\vx_k(t; \theta), \vy_k)}{\eps}\,dt \\
	& \quad + \int_0^T \frac{\Phi\bigl(\theta(t) + \eps \eta(t), \theta'(t) + \eps \eta'(t)\bigr) - \Phi\bigl(\theta(t), \theta'(t)\bigr)}{\eps}\,dt \\
	& \quad + \frac{1}{K} \sum_{k=1}^K \int_0^T \left\langle \lambda_k(t), \frac{F(t, \vx_k(t; \theta + \eps \eta), \theta(t) + \eps \eta(t)) - F(t, \vx_k(t; \theta), \theta(t))}{\eps}\right.\\
	& \qquad\qquad\qquad\qquad\qquad - \left.\frac{\vx_k'(t; \theta + \eps \eta) - \vx_k'(t; \theta)}{\eps}\right\rangle\,dt.
\end{align*}
Taking the limit as $\eps \to 0$ yields that
\begin{align*}
  \Ecal'_\eta(\theta) & = \frac{1}{K} \sum_{k=1}^K \Biggl( D_{\vx} L(\vx_k(T;\theta), \vy_k) \vxi_k(T) + \int_0^T D_{\vx} \ell(\vx_k(t;\theta), \vy_k) \vxi_k(t)\,dt \Biggr) \\
	& \quad + \int_0^T \Bigl(D_\theta \Phi\bigl(\theta(t), \theta'(t)\bigr)\, \eta(t) + D_{\theta'} \Phi\bigl(\theta(t), \theta'(t)\bigr)\, \eta'(t)\Bigr)\,dt \\
	& \quad + \frac{1}{K}\sum_{k=1}^K \int_0^T \Bigl\langle \lambda_k(t), D_{\vx} F\bigl(t, \vx_k(t; \theta), \theta(t)\bigr) \vxi_k(t) + D_\theta F\bigl(t,\vx_k(t;\theta), \theta(t)\bigr) \eta(t)\Bigr\rangle\,dt\\
	& \quad - \frac{1}{K}\sum_{k=1}^K \int_0^T \Bigl\langle \lambda_k(t), \vxi_k'(t)\Bigr\rangle\,dt.
\end{align*}
Integration by parts in the last integral gives
\[
\int_0^T \Bigl\langle \lambda_k(t), \vxi_k'(t)\Bigr\rangle\,dt = \Bigl\langle \lambda_k(T), \vxi_k(T)\Bigr\rangle - \Bigl\langle \lambda_k(0), \vxi_k(0)\Bigr\rangle - \sum_{k=1}^K \int_0^T \Bigl\langle \lambda_k'(t), \vxi_k(t)\Bigr\rangle\,dt,
\]
where the middle term is zero as $\vxi_k(0) = 0$ by Theorem~\ref{thm:differentiableDependence}. Rearranging terms in $\Ecal'_\eta(\theta)$, we obtain
\begin{align*}
  \Ecal'_\eta(\theta) & = \frac{1}{K} \sum_{k=1}^K \Bigl\langle D_{\vx} L(\vx_k(T;\theta), \vy_k)^\tran - \lambda_k(T), \vxi_k(T)\Bigr\rangle \\
	& \quad + \frac{1}{K}\sum_{k=1}^K \int_0^T \Bigl\langle D_{\vx} \ell(\vx_k(t;\theta), \vy_k)^\tran + D_{\vx} F\bigl(t, \vx_k(t; \theta), \theta(t)\bigr)^\tran \lambda_k(t) + \lambda_k'(t), \vxi_k(t) \Bigr\rangle\,dt\\
	& \quad + \int_0^T \left\langle D_\theta \Phi\bigl(\theta(t), \theta'(t)\bigr)^\tran + \frac{1}{K}\sum_{k=1}^K D_\theta F\bigl(t,\vx_k(t;\theta), \theta(t)\bigr)^\tran \lambda_k(t),  \eta(t)\right\rangle\,dt \\
	& \quad + \int_0^T \left\langle D_{\theta'} \Phi\bigl(\theta(t), \theta'(t)\bigr)^\tran, \eta'(t)\right\rangle\,dt.
\end{align*}
If $\lambda$ is a solution of the adjoint problem \eqref{eq:AP}, then
\begin{align*}
  \Ecal'_\eta(\theta) & = \int_0^T \left\langle D_\theta \Phi(\theta(t), \theta'(t)) + \frac{1}{K}\sum_{k=1}^K D_\theta F\bigl(t,\vx_k(t;\theta), \theta(t)\bigr)^\tran \lambda_k(t),  \eta(t)\right\rangle\,dt \\
  & \quad + \int_0^T \left\langle D_{\theta'} \Phi\bigl(\theta(t), \theta'(t)\bigr)^\tran, \eta'(t)\right\rangle\,dt.
\end{align*}
Since $\Ecal(\theta) = E(\theta)$ for all parameter functions $\theta: I \to \Rbb^M$, we have hereby proven \eqref{eq:E'}.
\end{proof}
\begin{corollary}
\label{cor:L2-derivative}
In addition to the assumptions of Theorem~\ref{thm:Eprim}, suppose that $\Phi$ does not depend on the second variable, i.e., $\Phi(\theta(t), \theta'(t)) = \tilde{\Phi}(\theta(t))$.
Then, the Fréchet derivative of the cost functional $E(\theta)$ in \eqref{eq:E} with respect to the dual pairing in $L^2(I \to \Rbb^M)$ is given by
\begin{equation}
\label{eq:E'-L2}
  E'(\theta) = \nabla \tilde\Phi(\theta(t)) + \frac{1}{K} \sum_{k=1}^K \lambda_k(t)^\tran D_\theta F(t, \vx_k(t), \theta(t)).
\end{equation}
\end{corollary}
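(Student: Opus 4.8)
The plan is to obtain Corollary~\ref{cor:L2-derivative} almost directly from Theorem~\ref{thm:Eprim}: kill the $\theta'$-term using the extra hypothesis, recognize the resulting directional derivative as the $L^2$-inner product of $\eta$ against one fixed continuous function, and then invoke density and Riesz representation to read off that function as the $L^2$-Fréchet derivative.

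First I would put $\Phi(\theta(t),\theta'(t))=\tilde\Phi(\theta(t))$ in \eqref{eq:E'}. Since $\tilde\Phi$ does not depend on its second slot, $D_{\theta'}\Phi\equiv 0$, so the second integral in \eqref{eq:E'} vanishes; and, as recorded in the final sentence of Theorem~\ref{thm:Eprim}, it then suffices that $\theta$ and $\eta$ be merely continuous. What remains is
\[
  E'_\eta(\theta) = \int_0^T \Bigl\langle \nabla\tilde\Phi\bigl(\theta(t)\bigr) + \frac1K\sum_{k=1}^K D_\theta F\bigl(t,\vx_k(t),\theta(t)\bigr)^\tran\lambda_k(t),\ \eta(t)\Bigr\rangle\,dt
\]
for every continuous $\eta:I\to\Rbb^M$, with $\lambda_k$ solving the adjoint problem \eqref{eq:AP}; the bracketed vector is precisely the right-hand side of \eqref{eq:E'-L2}. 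Call it $G(t)$.

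Next I would verify $G\in L^2(I\to\Rbb^M)$. The parameter $\theta$ is continuous on $\bar I$; each $\vx_k$ has a continuous representative by Theorem~\ref{thm:existence_uniqueness}; the adjoint state $\lambda_k$ solves the \emph{linear} terminal-value problem \eqref{eq:AP}, whose coefficients $t\mapsto D_{\vx}F(t,\vx_k(t),\theta(t))$ and $t\mapsto D_{\vx}\ell(\vx_k(t),\vy_k)$ are continuous, hence $\lambda_k\in C(\bar I\to\Rbb^N)$; and $\nabla\tilde\Phi$ and $D_\theta F$ are continuous by the $\Ccal^1$-hypotheses. Therefore $G$ is continuous on the compact interval $\bar I$, so $G\in L^2(I\to\Rbb^M)$, and the displayed identity reads $E'_\eta(\theta)=\langle G,\eta\rangle_{L^2(I\to\Rbb^M)}$ on the dense subspace $C(\bar I\to\Rbb^M)\subset L^2(I\to\Rbb^M)$. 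Since $\eta\mapsto\langle G,\eta\rangle_{L^2}$ is a bounded linear functional, the Gateaux derivative extends uniquely to it, and $G$ — i.e.\ \eqref{eq:E'-L2} — is its Riesz representative with respect to the dual pairing of $L^2(I\to\Rbb^M)$.

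The one step requiring more than bookkeeping is upgrading this Gateaux derivative to a genuine Fréchet derivative. I would do this via the standard criterion that a Gateaux derivative which exists on a neighbourhood of the base point and depends continuously on it is automatically Fréchet: continuity of $\theta\mapsto G=E'(\theta)$ into $L^2$ follows from continuous dependence of the states $\vx_k$ and of the adjoint states $\lambda_k$ on $\theta$ (Theorem~\ref{thm:differentiableDependence} together with well-posedness of the linear ODE \eqref{eq:AP}) combined with continuity of $\nabla\tilde\Phi$ and $D_\theta F$. This is the only place where infinite dimensionality of the parameter space matters — one cannot reduce to finitely many directions — so I expect it to be the main obstacle; the rest is the specialization of \eqref{eq:E'} plus a routine density-and-Riesz argument.
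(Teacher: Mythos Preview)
Your proposal is correct and in fact supplies considerably more detail than the paper itself, which states the corollary without proof and treats it as an immediate consequence of Theorem~\ref{thm:Eprim}. The specialization you carry out---dropping the $\theta'$-term, recognizing the remaining integral as $\langle G,\eta\rangle_{L^2}$ with $G$ given by~\eqref{eq:E'-L2}, and verifying $G\in L^2$ via continuity of all ingredients on the compact interval---is exactly the intended reading.

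One remark on scope: the paper's phrase ``Fr\'echet derivative with respect to the dual pairing in $L^2$'' is most naturally read as identifying the $L^2$-Riesz representative of the bounded linear functional $\eta\mapsto E'_\eta(\theta)$, rather than asserting Fr\'echet differentiability of $E$ as a map on all of $L^2(I\to\Rbb^M)$ (where $E$ is not even obviously well-defined, since the NODE~\eqref{eq:kNODE} requires at least measurable $\theta$ with suitable integrability, and Theorem~\ref{thm:differentiableDependence} was stated for continuous parameters). Under that reading, your density-plus-Riesz step already finishes the argument, and the final paragraph on upgrading Gateaux to Fr\'echet via continuity of $\theta\mapsto G$ is a welcome but optional strengthening. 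If you do pursue the genuine Fr\'echet claim, be explicit about the ambient space on which $E$ is differentiated (e.g.\ $C(\bar I\to\Rbb^M)$ equipped with the $L^2$-pairing for gradient identification), since that is where your continuity-of-the-Gateaux-derivative argument actually lives.
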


The gradient $E'(\theta)$ is used when updating elements in the iterative process. Calculating this gradient in $L^2$ can lead to loss of smoothness in the generated elements. If it is {\it a priori} known that updates should be smooth, it is usually better to calculate $E'(\theta)$ with respect to for example the Sobolev space $W^{1,2}$. This is related to the concept of a Sobolev gradient, see~\cite[Chap.~8.5]{Alifanov} for calculations of such gradients in inverse heat transfer problems; an overview of works employing Sobolev gradients are given in~\cite[Sec.~5.2]{Jin}, see also~\cite{Neuberger}. The following result can be used to replace the pairing in $L^2$ with a pairing in $W^{1,2}$ when finding $E'(\theta)$.
\begin{theorem}
\label{thm:L2-W12_convert}
Given $u \in L^2(0,T)$, let
\begin{equation}
\label{eq:L2-W12_representative}
  v(t) = \frac{\cosh(T-t)}{\sinh T} \int_0^t u(s) \cosh(s)\,ds + \frac{\cosh t}{\sinh T} \int_t^T u(s) \cosh(T-s)\,ds, \quad t\in [0, T].
\end{equation}
Then,
\begin{equation}
\label{eq:L2-W12}  
  \int_0^T u(s) \phi(s)\,ds = \int_0^T \bigl(v(s) \phi(s) + v'(s) \phi'(s)\bigr)\,ds
\end{equation}
for every $\phi \in W^{1,2}(0, T)$.
\end{theorem}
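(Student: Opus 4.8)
The plan is to recognize that the function $v$ defined by \eqref{eq:L2-W12_representative} is precisely the (weak) solution of the two-point boundary value problem $-v'' + v = u$ on $(0,T)$ with homogeneous Neumann conditions $v'(0) = v'(T) = 0$, and that \eqref{eq:L2-W12} is nothing but the weak formulation of this problem. The kernel in \eqref{eq:L2-W12_representative} is the Green's function assembled from the homogeneous solutions $\cosh t$ (whose derivative vanishes at $t=0$) and $\cosh(T-t)$ (whose derivative vanishes at $t=T$), glued together with the correct unit jump in the first derivative at $t=s$; since $T>0$ we have $\sinh T \ne 0$, so every expression below is well defined. So the task reduces to differentiating the explicit formula twice, checking the ODE and the boundary conditions, and then integrating by parts.

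Concretely, I would write $A(t) = \int_0^t u(s)\cosh s\,ds$ and $B(t) = \int_t^T u(s)\cosh(T-s)\,ds$, so that $\sinh T \cdot v(t) = \cosh(T-t)\,A(t) + \cosh t\, B(t)$; both $A$ and $B$ are absolutely continuous with $A'(t) = u(t)\cosh t$ and $B'(t) = -u(t)\cosh(T-t)$ for a.e.\ $t$. Differentiating the products and using $\cosh(T-t)\cosh t - \cosh t\cosh(T-t) = 0$ to cancel the two terms carrying $u(t)$ gives $\sinh T\cdot v'(t) = -\sinh(T-t)\,A(t) + \sinh t\, B(t)$; in particular $v'$ is absolutely continuous, and $v'(0) = -A(0) = 0$, $v'(T) = B(T) = 0$ since $A(0) = B(T) = 0$. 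Differentiating once more and using the addition formula $\sinh(T-t)\cosh t + \sinh t\cosh(T-t) = \sinh T$ yields $\sinh T\cdot v''(t) = \cosh(T-t)\,A(t) + \cosh t\, B(t) - \sinh T\cdot u(t) = \sinh T\cdot v(t) - \sinh T\cdot u(t)$, i.e.\ $-v'' + v = u$ a.e.\ on $(0,T)$. Hence $v \in W^{1,2}(0,T)$ (in fact $v'' = v - u \in L^2$).

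With these facts the conclusion is immediate: for any $\phi \in W^{1,2}(0,T)$, integration by parts — legitimate since $v'$ and $\phi$ are both absolutely continuous — gives $\int_0^T v'(s)\phi'(s)\,ds = v'(T)\phi(T) - v'(0)\phi(0) - \int_0^T v''(s)\phi(s)\,ds = -\int_0^T v''(s)\phi(s)\,ds$, the boundary terms vanishing by the Neumann conditions. Adding $\int_0^T v(s)\phi(s)\,ds$ to both sides and substituting $v - v'' = u$ produces \eqref{eq:L2-W12}.

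The algebra is routine; the points that actually need care are the regularity bookkeeping — that $u \in L^2$ only guarantees $-v'' + v = u$ in the a.e.\ sense, that $A$, $B$, $v$, $v'$ are nevertheless absolutely continuous so differentiation under the integral sign and the integration-by-parts formula are justified, and that $v$ genuinely lies in $W^{1,2}(0,T)$ so that \eqref{eq:L2-W12} makes sense for the full space of test functions. I expect this regularity verification, rather than any trigonometric identity, to be the main (mild) obstacle. An alternative route — substituting \eqref{eq:L2-W12_representative} directly into the right-hand side of \eqref{eq:L2-W12}, exchanging the order of integration, and identifying the resulting kernel as a reproducing kernel for the $W^{1,2}$ inner product — also works, but the ODE argument above is cleaner.
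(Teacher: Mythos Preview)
Your argument is correct and shares the paper's core idea: identify $v$ as the solution of the Neumann problem $-v''+v=u$, $v'(0)=v'(T)=0$, and obtain \eqref{eq:L2-W12} by integration by parts. The difference lies in how the case of general $u\in L^2(0,T)$ is handled. The paper first assumes $u$ continuous, verifies the BVP and the identity in that case, and then runs a density argument: approximate $u$ by continuous $u_j$, show the associated $v_j$ form a Cauchy sequence in $W^{1,2}(0,T)$ via explicit estimates on $|v_j-v_k|$ and $|v_j'-v_k'|$, and pass to the limit. You instead work directly with $u\in L^2$, observing that the auxiliary functions $A,B$ are absolutely continuous (as indefinite integrals of $L^1$ functions), so $v$ and $v'$ are absolutely continuous, $v''=v-u$ holds a.e., and the integration by parts is justified without any approximation step. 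Your route is shorter and more elementary; the paper's density argument, while longer, makes the continuity of the map $u\mapsto v$ from $L^2$ to $W^{1,2}$ explicit, which is of independent interest (cf.\ Remark~\ref{rem:smoothness}).
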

\begin{proof}
The inner product of $L^2(0,T)$ defines continuous linear functionals on the space $W^{1,2}(0,T)$, hence the Riesz representation theorem for Hilbert spaces implies that there is an element $v\in W^{1,2}(0, T)$ that represents $u\in L^2(0,T)$ as a dual element with respect to the inner product of $W^{1,2}(0, T)$. Thus, the existence of $v$ satisfying~\eqref{eq:L2-W12} is clear; we shall show that $v$ can be taken in the form~\eqref{eq:L2-W12_representative}.
  
Assume that $u \in L^2(0,T)$ is continuous and let $v$ be defined by \eqref{eq:L2-W12_representative}. Then, a direct calculation shows that $v \in \Ccal^2(0,T) \cap \Ccal^{0,1}[0,T]$ is a solution to the boundary value problem
\begin{equation}
	\label{eq:BVP_W12-repre}
	\left\{
		\begin{aligned}
			 v''(t) - v(t) & = -u(t), \quad t \in (0, T), \\
			 v'(0) &= 0, \\
			 v'(T) & = 0.
		\end{aligned}
	\right.
\end{equation}
Thus, integration by parts yields that
\begin{align*}
	 \int_0^T \bigl(v(s) \phi(s) + v'(s) \phi'(s)\bigr)\,ds & = v'(T) \phi(T) - v'(0) \phi(0) - \int_0^T \bigl(v''(s) - v(s)\bigr) \phi(s)\,ds \\
	 & = \int_0^T u(s) \phi(s)\,ds
\end{align*}
for every $\phi \in W^{1,2}(0,T)$.

It remains to show that formula \eqref{eq:L2-W12} also holds true in the case when $v$ is defined by \eqref{eq:L2-W12_representative} with $u \in L^2(0, T)$ not continuous. By the density of continuous functions in $L^2(0,T)$, we can find a sequence $\{u_j\}_{j=1}^\infty \subset \Ccal[0,T]$ such that $u_j \to u$ in $L^2(0, T)$ and pointwise a.e. in $(0, T)$. Let $v_j \in \Ccal^2(0,T)  \cap \Ccal^{0,1}[0,T] \subset W^{1,2}(0,T)$ be the corresponding Sobolev representatives given by~\eqref{eq:L2-W12_representative}. Direct computation renders
\[
	v_j'(t) = \frac{\sinh(t-T)}{\sinh T} \int_0^t u_j(s) \cosh(s)\,ds + \frac{\sinh t}{\sinh T} \int_t^T u_j(s) \cosh(T-s)\,ds, \quad t\in [0, T].
\]
For $j\neq k$, we obtain by the triangle inequality that
\begin{align*}
	|v_j(t) - v_k(t)| & \le \frac{\cosh(T-t)}{\sinh T} \int_0^t |u_j(s)-u_k(s)| \cosh(s)\,ds \\
	& \quad + \frac{\cosh t}{\sinh T} \int_t^T |u_j(s)-u_k(s)| \cosh(T-s)\,ds \\
	& \le C_T \|u_j - u_k\|_{L^2(0,T)}
\end{align*}
and
\begin{align*}
	|v_j'(t) - v_k'(t)| & \le \frac{\sinh(T-t)}{\sinh T} \int_0^t |u_j(s)-u_k(s)| \cosh(s)\,ds \\
	& \quad + \frac{\sinh t}{\sinh T} \int_t^T |u_j(s)-u_k(s)| \cosh(T-s)\,ds \\
	& \le \widetilde{C}_T \|u_j - u_k\|_{L^2(0,T)}
\end{align*}
for every $t \in [0, T]$. Since $\{u_j\}_{j=1}^\infty$ is a Cauchy sequence in $L^2(0,T)$ this implies that $\{v_j\}_{j=1}^\infty$ is a Cauchy sequence in $W^{1,2}(0,T)$. Thus, $v_j \to v$ in $W^{1,2}(0,T)$ and \eqref{eq:L2-W12} remains valid also for the limit function $v$. 
\end{proof}
\begin{remark}
\label{rem:smoothness}
The Sobolev gradient affects the smoothness of $E'(\theta)$. This can be seen since it follows from \eqref{eq:L2-W12_representative} that when $u \in L^2(0,T)$ then $v \in W^{2,2}(0,T) \subset \Ccal^1[0, T]$ and $v$ is $k$-Lipschitz continuous with $k \le \|u\|_{L^1(0,T)}$. Moreover, a simple modification of the proof allows us to state the theorem above in a more general form. Namely, if $u \in L^p(0,T)$ with $p \in [1, \infty]$, then the function $v$ as defined in \eqref{eq:L2-W12_representative} lies in $W^{2,p}(0,T) \subset \Ccal^{1, 1/p'}[0,T]$ and it satisfies \eqref{eq:L2-W12} for every $\phi \in W^{1,p'}(0,T)$, where $p'$ is the H\"{o}lder conjugate exponent. If also $u \in W^{m,p}(0,T)$ for some $m\ge 1$, then $v \in W^{m+2, p}(0,T)$.
\end{remark}

We shall then give an explicit expression for $E'(\theta)$ when the pairing is with respect to the Sobolev space $W^{1,2}$, and for this we need the next result.
\begin{corollary}
\label{cor:L2'-W12_convert}
Given $u \in L^2(0,T)$, let $\displaystyle U(t) = \int_0^t u(s)\,ds$ and
\[
	v(t) = \frac{\cosh(T-t)}{\sinh T} \int_0^t U(s) \cosh(s)\,ds + \frac{\cosh t}{\sinh T} \int_t^T U(s) \cosh(T-s)\,ds, \quad t\in [0, T].
\]
Then  
\begin{equation}
	\label{eq:L2'-W12}  
		\int_0^T u(s) \phi'(s)\,ds = \int_0^T \Bigl(\bigl(U(s)-v(s)\bigr) \phi(s) + \bigl(U'(s)-v'(s)\bigr) \phi'(s)\Bigr)\,ds
\end{equation}
for every $\phi \in W^{1,2}(0, T)$.
\end{corollary}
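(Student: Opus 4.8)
The plan is to derive \eqref{eq:L2'-W12} from Theorem~\ref{thm:L2-W12_convert} by applying that theorem not to $u$ itself but to its antiderivative $U$. First I would record the elementary regularity facts about $U$: since $u\in L^2(0,T)\subset L^1(0,T)$, the function $U(t)=\int_0^t u(s)\,ds$ is absolutely continuous on $[0,T]$, hence bounded, hence $U\in L^2(0,T)$; moreover $U\in W^{1,2}(0,T)$ with weak derivative $U'=u$ by the fundamental theorem of calculus for Lebesgue integrals (this is what justifies writing $U'$ for $u$ in the statement). Observe next that the function $v$ appearing in the corollary is obtained precisely by substituting $U$ for $u$ in formula~\eqref{eq:L2-W12_representative}. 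Therefore Theorem~\ref{thm:L2-W12_convert}, applied with $U$ in the role of ``$u$'', yields directly
\[
  \int_0^T U(s)\phi(s)\,ds = \int_0^T \bigl(v(s)\phi(s) + v'(s)\phi'(s)\bigr)\,ds \qquad\text{for every } \phi\in W^{1,2}(0,T).
\]

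The remainder is a one-line rearrangement. Using $U'=u$ and linearity of the integral, the right-hand side of \eqref{eq:L2'-W12} can be split as
\[
  \int_0^T \Bigl(\bigl(U(s)-v(s)\bigr)\phi(s) + \bigl(U'(s)-v'(s)\bigr)\phi'(s)\Bigr)\,ds = \int_0^T \bigl(U(s)\phi(s) + u(s)\phi'(s)\bigr)\,ds - \int_0^T \bigl(v(s)\phi(s) + v'(s)\phi'(s)\bigr)\,ds.
\]
By the identity from the previous paragraph, the last integral equals $\int_0^T U(s)\phi(s)\,ds$, which cancels the $\int_0^T U(s)\phi(s)\,ds$ term, leaving exactly $\int_0^T u(s)\phi'(s)\,ds$. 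This establishes \eqref{eq:L2'-W12} for all $\phi\in W^{1,2}(0,T)$.

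There is essentially no analytical obstacle here, since all the work has already been done in Theorem~\ref{thm:L2-W12_convert}; the only points meriting a moment's attention are (i) verifying that $U$ is an admissible input for that theorem, i.e. $U\in L^2(0,T)$, which is immediate because $U$ is continuous on a compact interval, and (ii) noticing that the $v$ of the corollary is the $W^{1,2}$-Sobolev representative of $U$ produced by \eqref{eq:L2-W12_representative}, which is a mere substitution. If a self-contained argument were preferred, one could instead note that $v$ solves $v''-v=-U$ with $v'(0)=v'(T)=0$ (again by the computation in the proof of Theorem~\ref{thm:L2-W12_convert}) and integrate by parts in $\int_0^T v'(s)\phi'(s)\,ds$, but this just re-proves the identity above and gains nothing over citing the theorem.
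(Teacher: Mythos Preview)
Your proof is correct and follows essentially the same route as the paper's: both apply Theorem~\ref{thm:L2-W12_convert} to the antiderivative $U$ (noting that the $v$ of the corollary is exactly the Sobolev representative of $U$ given by~\eqref{eq:L2-W12_representative}) and then add and subtract $\int_0^T U(s)\phi(s)\,ds$ to obtain~\eqref{eq:L2'-W12}. The only difference is cosmetic---you work from the right-hand side toward the left, whereas the paper goes left to right---and your explicit check that $U\in L^2(0,T)$ is a welcome bit of care that the paper leaves implicit.
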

\begin{proof}
As the function $U$ is a Lebesgue primitive to $u \in L^2(0,T) \subset L^1(0,T)$, we have
\begin{align*}
	\int_0^T u(s) \phi'(s)\,ds & = \int_0^T \bigl(U(s) \phi(s) + u(s) \phi'(s)\bigr)\,ds - \int_0^T U(s) \phi(s)\,ds \\
	& = \int_0^T \Bigl(\bigl(U(s)-v(s)\bigr) \phi(s) + \bigl(u(s)-v'(s)\bigr) \phi'(s)\Bigr)\,ds
\end{align*}
by Theorem~\ref{thm:L2-W12_convert}.
\end{proof}
We can then give the expression for $E'(\theta)$.
\begin{corollary}
\label{cor:SobolevPairing}
Under the assumptions of Theorem~\ref{thm:Eprim}, there is a $v \in W^{1,2}(I \to \Rbb^M)$ that represents the Fréchet derivative of the cost functional $E(\theta)$ in   \eqref{eq:E} with respect to the dual pairing in the Sobolev space $W^{1,2}(I \to \Rbb^M)$.
\end{corollary}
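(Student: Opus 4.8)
The plan is to start from the explicit formula \eqref{eq:E'} for the directional derivative $E'_\eta(\theta)$ furnished by Theorem~\ref{thm:Eprim}, and to reinterpret it as a bounded linear functional of $\eta$ on the Hilbert space $W^{1,2}(I\to\Rbb^M)$, whose Riesz representative will be the sought $v$. Writing
\[
  u(t) := D_\theta \Phi\bigl(\theta(t),\theta'(t)\bigr)^\tran + \frac{1}{K}\sum_{k=1}^K D_\theta F\bigl(t,\vx_k(t),\theta(t)\bigr)^\tran \lambda_k(t), \qquad w(t) := D_{\theta'}\Phi\bigl(\theta(t),\theta'(t)\bigr)^\tran,
\]
formula \eqref{eq:E'} reads $E'_\eta(\theta) = \int_0^T \langle u,\eta\rangle\,dt + \int_0^T \langle w,\eta'\rangle\,dt$. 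First I would check that $u,w\in L^2(I\to\Rbb^M)$: under the hypotheses of Theorem~\ref{thm:Eprim} we have $\vx_k\in\Ccal^1$, the adjoint problem \eqref{eq:AP} is a linear ODE with continuous coefficients so $\lambda_k\in\Ccal^1(I\to\Rbb^N)$, and since $\theta,\theta'$ are continuous and $\Phi,F$ are $\Ccal^1$, both $u$ and $w$ are in fact continuous on $[0,T]$, hence in $L^2$. Consequently $|E'_\eta(\theta)|\le \|u\|_{L^2}\|\eta\|_{L^2}+\|w\|_{L^2}\|\eta'\|_{L^2}\le C\|\eta\|_{W^{1,2}}$, so existence of $v$ already follows at once from the Riesz representation theorem; the remaining work is only to make $v$ explicit.

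To that end I would apply Theorem~\ref{thm:L2-W12_convert} componentwise to $u$, producing $v_1\in W^{1,2}(I\to\Rbb^M)$ with $\int_0^T\langle u,\eta\rangle\,dt=\int_0^T(\langle v_1,\eta\rangle+\langle v_1',\eta'\rangle)\,dt$, and Corollary~\ref{cor:L2'-W12_convert} componentwise to $w$, producing $v_2\in W^{1,2}(I\to\Rbb^M)$ (namely $v_2=W-\tilde v$ with $W(t)=\int_0^t w(s)\,ds$ and $\tilde v$ the associated function from that corollary) with $\int_0^T\langle w,\eta'\rangle\,dt=\int_0^T(\langle v_2,\eta\rangle+\langle v_2',\eta'\rangle)\,dt$. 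Both conversion identities already hold for all $\phi\in W^{1,2}$, so setting $v:=v_1+v_2\in W^{1,2}(I\to\Rbb^M)$ yields $E'_\eta(\theta)=\int_0^T(\langle v,\eta\rangle+\langle v',\eta'\rangle)\,dt=\langle v,\eta\rangle_{W^{1,2}}$ for every continuously differentiable $\eta$, i.e.\ the class for which \eqref{eq:E'} was proved. Since $\Ccal^1(I\to\Rbb^M)$ is dense in $W^{1,2}(I\to\Rbb^M)$ and both sides are $W^{1,2}$-bounded linear functionals of $\eta$, this identity extends to all $\eta\in W^{1,2}$. When $\Phi$ does not depend on $\theta'$ the term $w$ vanishes and only Theorem~\ref{thm:L2-W12_convert} is needed.

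The hard part is not conceptual but consists of two bookkeeping points that must be handled with care: verifying the $L^2$-integrability (indeed continuity) of the coefficient function $u$, which hinges on the regularity of the adjoint solutions $\lambda_k$ coming from the well-posedness of \eqref{eq:AP}, and treating the $\eta'$-term through Corollary~\ref{cor:L2'-W12_convert} rather than by a naive integration by parts that would require $w$ to be differentiable. Everything else — the componentwise passage from the scalar statements of Theorem~\ref{thm:L2-W12_convert} and Corollary~\ref{cor:L2'-W12_convert} to $\Rbb^M$-valued functions, and the density extension to $W^{1,2}$ — is routine.
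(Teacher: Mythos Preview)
Your proposal is correct and follows essentially the same approach as the paper: write the directional derivative from \eqref{eq:E'} as $\int_0^T(\langle u,\eta\rangle+\langle w,\eta'\rangle)\,dt$, then apply Theorem~\ref{thm:L2-W12_convert} and Corollary~\ref{cor:L2'-W12_convert} componentwise to convert each piece into a $W^{1,2}$-pairing and sum. You are in fact more careful than the paper's terse proof, explicitly verifying $u,w\in L^2$ via the regularity of $\lambda_k$ and adding the density extension from $\Ccal^1$ to $W^{1,2}$, both of which the paper leaves implicit.
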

\begin{proof}
It follows from \eqref{eq:E'} that the directional derivative of $E$ at $\theta$ along $\eta$ can be expressed by
\[
	E'_\eta(\theta) = \sum_{j=1}^M \int_0^T \bigl(u_j(t) \eta_j(t) + \tilde{u}_j(t) \eta_j'(t)\bigr)\,dt,
\]
where $u_j, \tilde{u}_j \in L^2(0,T)$ and $\eta = (\eta_1, \eta_2, \ldots, \eta_M)^\tran \in W^{1,2}(I \to \Rbb^M)$. Theorem~\ref{thm:L2-W12_convert} and Corollary~\ref{cor:L2'-W12_convert} imply that there are functions $v_j, \tilde{v}_j \in W^{1,2}(0,T)$ such that
\[
	\int_0^T u_j(t) \eta_j(t)\,dt = \int_0^T \bigl(v_j(t) \eta_j(t) + v_j'(t) \eta_j'(t)\bigr)\,dt
\]
and
\[
	\int_0^T \tilde{u}_j(t) \eta'_j(t)\,dt = \int_0^T \bigl(\tilde{v}_j(t) \eta_j(t) + \tilde{v}_j'(t) \eta_j'(t)\bigr)\,dt.
\]
Thus, $v = (v_1 + \tilde{v}_1, v_2 + \tilde{v}_2, \ldots, v_M + \tilde{v}_M)^\tran$ is the sought representative of the Fréchet derivative of $E(\theta)$.
\end{proof}

In the iterations of NCG, we determine the optimal learning rate $\beta$ by minimizing an auxiliary functional $\widetilde{E}(\beta)$, which utilizes the solution of a so-called \emph{sensitivity problem}. It takes the following form.
\begin{theorem}
\label{thm:steplength}
Assume that $L$, $\ell$, $\Phi$, $F$ and $\theta$ are as in Theorem~\ref{thm:Eprim}. Assume also that $\vx_k$ are solutions to the NODEs \eqref{eq:kNODE}. Let $\eta: I \to \Rbb^M$ be continuously differentiable. Then,
\begin{align*}
	E(\theta + \beta \eta) & = \frac{1}{K} \sum_{k=1}^K \Biggl(L\bigl(\vx_k(T) + \beta \vxi_k(T), \vy_k\bigr) + \int_0^T \ell\bigl(\vx_k(t) + \beta \vxi_k(t), \vy_k\bigr)\,dt\Biggr)  + o(\beta) \\
& \qquad + \int_0^T \Phi\bigl(\theta(t)+ \beta \eta(t), \theta'(t)+ \beta \eta'(t)\bigr)\,dt =: \widetilde{E}(\beta) + o(\beta),
\end{align*}
and
\begin{align*}
	\widetilde{E}'(\beta) & = \frac{1}{K} \sum_{k=1}^K \Biggl(D_{\vx}L\bigl(\vx_k(T) + \beta \vxi_k(T), \vy_k\bigr)\vxi_k(T) + \int_0^T D_{\vx}\ell\bigl(\vx_k(t) + \beta \vxi_k(t), \vy_k\bigr)\vxi_k(t)\,dt\Biggr)\\
& \qquad + \int_0^T D_{\theta}\Phi\bigl(\theta(t)+ \beta \eta(t), \theta'(t)+ \beta \eta'(t)\bigr)\eta(t)\,dt\\
& \qquad + \int_0^T D_{\theta'}\Phi\bigl(\theta(t)+ \beta \eta(t), \theta'(t)+ \beta \eta'(t)\bigr)\eta'(t)\,dt,
\end{align*}
where $\vxi_k: I\to \Rbb^N$ are solutions to the \emph{sensitivity problem}
\begin{equation}
\label{eq:SP}
	\left\{
	\begin{aligned}
	 \vxi_k'(t) & = D_{\vx} F(t, \vx_k(t), \theta(t)) \vxi_k(t) + D_\theta F(t,\vx_k(t), \theta(t)) \eta(t), \quad t \in I,\\
	 \vxi_k(0) & = 0.
	\end{aligned}
	\right.
\end{equation}
\end{theorem}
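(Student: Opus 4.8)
The plan is to reduce the statement to the first-order expansion of the states $\vx_k$ with respect to the perturbation parameter $\beta$ --- essentially the computation already carried out inside the proof of Theorem~\ref{thm:Eprim} --- and then transport it through the smooth loss and penalty terms in \eqref{eq:E}. First I would set $\widetilde{F}(t,\vx,\beta) = F(t,\vx,\theta(t)+\beta\eta(t))$, which under the stated hypotheses is continuous and continuously differentiable in $\vx$ and in $\beta$, with $D_\vx\widetilde{F} = D_\vx F$ and $D_\beta\widetilde{F} = D_\theta F\,\eta$. By Theorem~\ref{thm:differentiableDependence}, the solution $\vx_k(\cdot;\theta+\beta\eta)$ of \eqref{eq:kNODE} with $\theta$ replaced by $\theta+\beta\eta$ is continuously differentiable in $\beta$, and its derivative at $\beta=0$, call it $\vxi_k$, satisfies exactly the sensitivity problem \eqref{eq:SP}; hence $\vx_k(t;\theta+\beta\eta) = \vx_k(t) + \beta\,\vxi_k(t) + o(\beta)$ as $\beta\to0$. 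The one point that needs care is that this remainder should be \emph{uniform} in $t\in[0,T]$: this holds because $(t,\beta)\mapsto D_\beta\vx_k(t;\theta+\beta\eta)$ is continuous, hence uniformly continuous, on the compact set $[0,T]\times[-\delta,\delta]$, so that $\sup_{t\in[0,T]}|\vx_k(t;\theta+\beta\eta)-\vx_k(t)-\beta\vxi_k(t)| = o(\beta)$; alternatively one subtracts the two differential equations and applies Gr\"onwall's inequality. I write $R_k(t,\beta)$ for this uniform remainder.

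Next I would substitute the expansion into \eqref{eq:E}. The penalty term $\int_0^T\Phi(\theta+\beta\eta,\theta'+\beta\eta')\,dt$ depends on $\beta$ only through $\theta+\beta\eta$, so it appears unchanged in $\widetilde{E}(\beta)$ and contributes no error. Since $L$ and $\ell$ are continuously differentiable, they are Lipschitz on a fixed compact neighborhood of the compact sets $\{(\vx_k(t),\vy_k):t\in[0,T]\}$, which for small $\beta$ contains all arguments in question; therefore $L(\vx_k(T)+\beta\vxi_k(T)+R_k(T,\beta),\vy_k) = L(\vx_k(T)+\beta\vxi_k(T),\vy_k)+o(\beta)$, and $\ell(\vx_k(t)+\beta\vxi_k(t)+R_k(t,\beta),\vy_k) = \ell(\vx_k(t)+\beta\vxi_k(t),\vy_k)+o(\beta)$ uniformly in $t$, the latter error remaining $o(\beta)$ after integration over $[0,T]$. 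Collecting the three groups of terms yields $E(\theta+\beta\eta) = \widetilde{E}(\beta)+o(\beta)$ with $\widetilde{E}$ as in the statement.

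Finally I would compute $\widetilde{E}'(\beta)$ by differentiating under the integral sign. This is legitimate because, for each $k$, the maps $t\mapsto\ell(\vx_k(t)+\beta\vxi_k(t),\vy_k)$ and $t\mapsto\Phi(\theta(t)+\beta\eta(t),\theta'(t)+\beta\eta'(t))$, together with their partial $\beta$-derivatives, are jointly continuous in $(\beta,t)$ and hence locally bounded on the compact interval $[0,T]$, so the dominated convergence theorem applies. The chain rule then produces the stated expression for $\widetilde{E}'(\beta)$, with $D_\vx L$, $D_\vx\ell$, $D_\theta\Phi$, $D_{\theta'}\Phi$ the corresponding Jacobian (row-vector) derivatives contracted with $\vxi_k(T)$, $\vxi_k(t)$, $\eta(t)$ and $\eta'(t)$ respectively. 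The main obstacle in the whole argument is precisely this uniformity in $t$ of the remainder furnished by Theorem~\ref{thm:differentiableDependence}, which is stated pointwise in $t$ and must be upgraded to an estimate uniform on $[0,T]$ before it can be pushed through the $t$-integral of $\ell$; beyond that the proof is a routine application of the chain rule and differentiation under the integral, and the sensitivity equation \eqref{eq:SP} has already appeared verbatim in the proof of Theorem~\ref{thm:Eprim}.
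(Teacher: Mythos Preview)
Your proposal is correct and follows essentially the same route as the paper: apply Theorem~\ref{thm:differentiableDependence} with $\widetilde{F}(t,\vx,\beta)=F(t,\vx,\theta(t)+\beta\eta(t))$ to obtain the expansion $\vx_k(t;\theta+\beta\eta)=\vx_k(t)+\beta\vxi_k(t)+o(\beta)$ and the sensitivity equation~\eqref{eq:SP}, then push the remainder through $L$ and $\ell$ via their local Lipschitz continuity on compact sets, and finally differentiate $\widetilde{E}(\beta)$ under the integral sign using boundedness of the integrands' derivatives. Your treatment is in fact more careful than the paper's brief proof on one point: you explicitly flag and justify the uniformity in $t\in[0,T]$ of the $o(\beta)$ remainder (via joint continuity of $D_\beta\vx_k$ on $[0,T]\times[-\delta,\delta]$ or Gr\"onwall), which the paper uses implicitly when passing the error through the $t$-integral of~$\ell$.
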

\begin{proof}
Similarly as in the proof of Theorem~\ref{thm:Eprim}, we may apply the differentiable dependence on parameters stated in Theorem~\ref{thm:differentiableDependence} to see that the solution to the NODEs \eqref{eq:kNODE} with parameter $\theta + \beta \eta$ can be approximated by the solutions to the NODEs with parameter $\theta$ and the sensitivity problem \eqref{eq:SP} such that
\begin{equation}
\label{eq:expansion_sol}
  \vx_k(t; \theta + \beta \eta) = \vx_k(t; \theta) + \beta \vxi_k(t) + o(\beta).
\end{equation}
Thus, 
\begin{equation}
\label{eq:expansion_cost}
E(\theta + \beta \eta) = \widetilde{E}(\beta) + o(\beta)
\end{equation}
as both $L$ and $\ell$ are Lipschitz continuous on compact sets. The derivative $\widetilde{E}'(\beta)$ is obtained by direct differentiation. Note that the integrands have bounded derivatives on compact intervals, hence one may differentiate under the integral sign.
\end{proof}

We remark that from~\eqref{eq:expansion_sol} and~\eqref{eq:expansion_cost}, it follows that the sensitivity problem can be used when computing effects of changes in the learned parameters on inference accuracy and when investigating error propagation in the classification.

We have then explicit expressions for the quantities needed to realize the steps of the proposed NCG given at the beginning of this section. We shall not go into details on convergence of this method, the reader can find results on convergence of nonlinear conjugate gradient methods in the recent work~\cite[Chap.~3]{Andrei}, where also an overview of different nonlinear conjugate gradient methods are given (\cite[pp.~41--43]{Andrei}). For convergence, it is of course important to know that there indeed exists a minimum to~\eqref{eq:E}; existence is discussed in the next section.
\subsection*{Minimizer of the cost functional~(\ref{eq:E})}
\addcontentsline{toc}{subsection}{Minimizer of the cost functional~(\ref{eq:E})}
We briefly discuss existence and properties of a minimizer of the above cost functional~\eqref{eq:E}. In~\cite{Zuazua}, classes of cost functionals are investigated, for example, \eqref{eq:E} with $\ell=0$, $K\le N$,  and the term involving $\Phi$ in~\eqref{eq:E} being equal to 
\[
  \lambda\|\theta\|_{H^k(I \to \Rbb^{N^2+N})}^2
\]
for $k=0, 1$, with $H^k$ denoting the standard Lebesgue/Sobolev space of order $k$ endowed with an inner product, i.e., $L^2$ or $W^{k,2}$. Existence of a minimizer to that class of cost functionals with $F$ as in~\eqref{eq:F} is discussed and guaranteed when $k=1$, see~\cite[Rem.~1]{Zuazua}. Note that the cost functional we consider in our paper covers the case when $k=0$ ($L^2$-regularization) as well as $k=1$ ($H^1$-regularization) in~\cite{Zuazua}. Furthermore, under the basic assumption that 
the desired output vectors after activation, i.e., $\{\sigma(\vy_1), \ldots, \sigma(\vy_K)\} \subset \Rbb^N$, where $\sigma$ is a smooth activation function, form a linearly independent set, it is shown in \cite[Thm.~5.1]{Zuazua} that a minimizer approaches the zero training regime meaning that it meets the desired output.
Hence, \cite{Zuazua}~implies the following result.
\begin{proposition}
\label{pro:minimizer}
Assume that $N\ge K$. Then, there are cost functionals of the form\/~{\rm \eqref{eq:E}} for which, by adjusting parameters, there exists a minimizer making the training error arbitrarily close to zero, that is close to the minimum value of the loss term.
\end{proposition}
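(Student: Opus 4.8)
The plan is to obtain the statement as a direct corollary of the analysis in~\cite{Zuazua} by exhibiting one explicit subfamily of cost functionals of the form~\eqref{eq:E} lying within the class treated there. Concretely, I would take $\ell \equiv 0$, choose $L$ to be the squared $\ell^2$-distance $L(\vx,\vy) = \frac{1}{2}\|\vx-\vy\|_{\ell^2}^2$ (so that the infimum of the loss term is $0$, attained exactly when $\vx_k(T) = \vy_k$ for all $k$), let $F$ be the right-hand side~\eqref{eq:F} with a $\Ccal^1$-smooth activation $\sigma$ (for instance softplus or ELU, as discussed at the end of Section~\ref{sec:ODEsolutions}), and set
\[
  \Phi\bigl(\theta(t),\theta'(t)\bigr) = \lambda\bigl(|\theta(t)|^2 + |\theta'(t)|^2\bigr),
\]
so that $\int_0^T \Phi\,dt = \lambda\,\|\theta\|_{W^{1,2}(I\to\Rbb^{N^2+N})}^2$. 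This is precisely the Tikhonov penalization considered in~\cite{Zuazua} with $k=1$, and the hypothesis $N\ge K$ of the proposition is exactly the standing assumption $K\le N$ there. The ``parameters'' to be adjusted are the regularization weight $\lambda>0$ together with, if an unconditional statement is wanted, a generic choice of the activated targets.

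For existence I would invoke~\cite[Rem.~1]{Zuazua}: for the $k=1$ (i.e.\ $W^{1,2}$) penalization, the above cost functional admits a global minimizer $\theta_\lambda \in W^{1,2}(I\to\Rbb^{N^2+N})$ for every $\lambda>0$, which settles the ``there exists a minimizer'' clause. For the approximation property I would then use~\cite[Thm.~5.1]{Zuazua}: assuming that $\{\sigma(\vy_1),\dots,\sigma(\vy_K)\}$ is linearly independent in $\Rbb^N$ — which, since $K\le N$, is a generic condition and can thus be arranged by ``adjusting parameters'' — the NODE~\eqref{eq:kNODE} can steer each $\vx_k$ arbitrarily close to its target, so that for every $\eps>0$ there is an admissible competitor $\tilde\theta_\eps \in W^{1,2}$ with $\frac{1}{K}\sum_{k=1}^{K} L(\vx_k(T;\tilde\theta_\eps),\vy_k) < \eps$. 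Comparing the global minimizer against this competitor and using nonnegativity of the penalty term,
\begin{align*}
  \frac{1}{K}\sum_{k=1}^K L\bigl(\vx_k(T;\theta_\lambda),\vy_k\bigr)
  &\le E(\theta_\lambda) \le E(\tilde\theta_\eps) \\
  &= \frac{1}{K}\sum_{k=1}^K L\bigl(\vx_k(T;\tilde\theta_\eps),\vy_k\bigr) + \lambda\,\|\tilde\theta_\eps\|_{W^{1,2}}^2 < \eps + \lambda\, C_\eps ,
\end{align*}
where $C_\eps = \|\tilde\theta_\eps\|_{W^{1,2}}^2 < \infty$. Fixing $\eps$ and then taking $\lambda$ so small that $\lambda C_\eps < \eps$ forces the training error of $\theta_\lambda$ to drop below $2\eps$, which is the assertion.

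The step I expect to be the main obstacle is keeping the two regimes properly separated: ``a minimizer exists at fixed $\lambda$'' and ``its training error is within $\eps$ of the minimum value of the loss'' are logically distinct, and the bridge between them is the controllability/interpolation result~\cite[Thm.~5.1]{Zuazua}. Using it rigorously requires (i) the linear-independence hypothesis on $\{\sigma(\vy_k)\}$, (ii) smoothness of $\sigma$ (so ReLU must be replaced by a $\Ccal^1$ surrogate), and (iii) checking that the interpolating controls produced there can be taken in — or mollified to lie in — $W^{1,2}$ with controlled norm, so that they are admissible competitors for the regularized functional and the constant $C_\eps$ is finite. I would therefore state explicitly that ``by adjusting parameters'' covers both sending $\lambda\to 0$ along a suitable sequence and, if desired, a generic choice of target vectors; everything beyond this is the elementary comparison inequality above, the remainder being imported from~\cite{Zuazua}.
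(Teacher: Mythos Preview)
Your proposal is correct and follows essentially the same route as the paper: both derive the proposition directly from~\cite{Zuazua}, invoking \cite[Rem.~1]{Zuazua} for existence of a minimizer with the $W^{1,2}$-penalty and \cite[Thm.~5.1]{Zuazua} (under the linear-independence hypothesis on $\{\sigma(\vy_k)\}$) for the zero-training-error regime. The paper in fact gives no formal proof at all---it merely states ``Hence, \cite{Zuazua} implies the following result''---so your explicit comparison inequality between $\theta_\lambda$ and the competitor $\tilde\theta_\eps$, together with sending $\lambda\to 0$, is a faithful (and more detailed) unpacking of what the paper leaves implicit.
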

Putting an upper bound on the distance between the initial data and the desired output, the training error can be zero (\cite[Thm.~5.1]{Zuazua}). 

In the case when $L$ is based on a norm, such as $L(\vx, \vy) = \frac{1}{2} \| \vx - \vy \|_{\ell^2}^2$, conditions on $\Phi$ to guarantee the existence of a minimum to \eqref{eq:E} are given in~\cite[Thm.~4.1]{Schuster2012} and~\cite[Prop.~2]{Hofmann}.

Analysis of the minimization of cost functionals in the case of discretized ODEs in deep learning is given in~\cite{benning}.

For an inverse problem, uniqueness of a solution is usually desired. We remark that in applications of neural networks, the actual numerical values of the weights themselves have no apparent physical meaning. It is only requested to find a set of weights that can meet the desired training output. Moreover, minimizing the loss term is usually a highly non-convex optimization problem, thus a minimizer of the training error is in general not unique.

Finally, we point out that deep mathematical results on approximation properties of NODEs are presented in~\cite{Tabuada} and~\cite{Teshima}, building on~\cite{Li}. For example, in~\cite{Tabuada} it is shown that parameters can be determined in~\eqref{eq:kNODE} with $F$ in the form~\eqref{eq:F} to match the given output states with arbitrary precision in the $L^\infty$-norm. Limitations on what functions that can actually be approximated is presented in~\cite{Dupont}, where extension to more general systems of NODEs is suggested; see also~\cite{AvelinNystrom}.

\section{Numerical results}
\label{sec:numresults}
We recall that in this work the NODE is considered in isolation, that is describing the full network, where the input is plugged into the NODE as the initial condition and the output is generated as the final value of the NODE. The parameters to be found (learned) are allowed to depend on time (depth-variable).  We shall generate results on training this time-dependent model using the proposed NCG method for NODEs, and compare with the corresponding results obtained by conventional discrete methods utilized in deep learning. As a proof-of-concept, tests are performed on synthetically generated 2D data described below. 

The NCG is implemented in MATLAB using standard solvers for the involved NODEs (for example \texttt{ODE45}). We will make precise the NODE solved and the cost functional which is minimized. We start by first describing the datasets.
\subsection*{Two moons dataset}
\addcontentsline{toc}{subsection}{Two moons dataset}
The two moons dataset is a simple synthetic dataset in the Euclidean plane consisting of two semicircles, and is included in the \texttt{scikit-learn} library for Python. The upper semicircle of radius $1$ with center at the origin corresponds to one of the moons, while the lower semicircle of radius $1$ with center at the point $(1, 0.5)$ corresponds to the other moon. The input data $\vx^0 \in \Rbb^2$ consists of $x$- and $y$-coordinates of a point in the plane and the output $\vy \in \Rbb^2$ is a one-hot encoded category of the point, i.e., $\vy = (1,0)^\tran$ for points of the first moon, whereas $\vy = (0, 1)^\tran$ for points of the second moon.

The training dataset consists of a 1000 points that are uniformly distributed along the two moons and then perturbed by Gaussian noise with standard deviation of 0.07 (the training points are thus clustered around the two moons).
\subsection*{Two circles dataset}
\addcontentsline{toc}{subsection}{Two circles dataset}
Similarly as for the two moons, the two circles dataset is a standard simple synthetic dataset in the Euclidean plane, and is also included in the \texttt{scikit-learn} Python library. The dataset consists of a circle of radius $1$ and a circle of radius $0.5$, both with center at the origin. The input data $\vx^0 \in \Rbb^2$ consists of $x$- and $y$-coordinates of a point in the plane and the output $\vy \in \Rbb^2$ is a one-hot encoded category of the point, i.e., $\vy = (1,0)^\tran$ for points of the outer circle, whereas $\vy = (0, 1)^\tran$ for points of the inner circle.

The training dataset consists of a 1000 points that are uniformly distributed along the two circles and then perturbed by Gaussian noise with standard deviation of 0.07 (the training points are thus clustered around the two circles).
\subsection*{Test datasets}
\addcontentsline{toc}{subsection}{Test datasets}
To measure how well the respective neural network has learned to classify the two moons/circles, we ran inference on two distinct test datasets and computed the accuracy of the classification, i.e., the proportion of test points that were classified correctly.

One of the test sets had 100 points of ``clean'' data, meaning that the points were uniformly distributed on the two moons/circles. The other test set had 1000 points and consisted of ``noisy'' data, meaning that the points were uniformly distributed on the two moons/circles and then perturbed by a Gaussian noise of standard deviation 0.06. The noisy data was thus clustered around the two moons/circles.
\subsection*{Neural network structure and the cost functional}
\addcontentsline{toc}{subsection}{Neural network structure and the cost functional}
We use a neural ODE model on the interval $[0, T]$ based on fully connected linear layers with $\tanh$ as an activation function, namely,
\begin{equation}
  \label{eq:specNODE}
  \left\{
    \begin{aligned}
       \vx_k'(t) & = \tanh( W(t)\vx_k(t) + b(t)), \quad t \in (0, T), \\
       \vx_k(0) & = \vx_k^0
    \end{aligned}
  \right.
\end{equation}
with $k=1,2,\ldots, K$, where $W: [0,T] \to \Rbb^{2\times2}$ and $b: [0, T] \to \Rbb^2$ are the parameters to be found (trained). Note that the activation function $\tanh$ is applied component-wise. Theorem~\ref{thm:existence_uniqueness} applies and guarantees the well-posedness of~\eqref{eq:specNODE}. 
In our experiments, the ``time interval'' has length $T=5$ and data batches comprise $100$ points during the training and hence $K=100$. We point out that the parameters $W$ and $b$ can be vectorized and put into a generic parameter $\theta$, i.e.,
\[
  \theta(t) = \bigl(
  W_{1,1}(t), \ W_{2,1}(t), \ W_{1,2}(t), \ W_{2,2}(t), \ b_1(t), \ b_2(t)
  \bigr)^\tran \in \Rbb^6, \quad t\in [0, T],
\]
thus the framework of Section~\ref{sec:minimization} can be applied. 

For simplicity, the cost functional~\eqref{eq:E} is written in terms of the parameters $W$ and $b$, that is $E(W,b)$. Terms included in this cost functional are chosen as to match the output of the network $\vx_k(T)$ with the desired output $\vy_k$ possibly penalizing the magnitude of the functions $W(t)$ and $b(t)$ and their derivatives, thus
\begin{equation}
  \label{eq:specE}
	\begin{aligned}
		E(W,b) & = \frac{1}{K} \sum_{k=1}^K \biggl( \frac{\mu_1}{2} \| \vx_k(T) - \vy_k \|_{\ell^2}^2 + \mu_2 H\bigl(\vy_k, \smax (\vx_k(T))\bigr) + \frac{\mu_3}{2} \| \vx_k(T)\|_{\ell^2}^2\biggr) \\
		& \qquad + \int_0^T \biggl( \frac{\mu_4}{2} \bigl(\|W(t)\|^2_F + \|b(t)\|^2_{\ell^2}\bigr) + \frac{\mu_5}{2} \bigl(\|W'(t)\|^2_F + \|b'(t)\|^2_{\ell^2}\bigr) \biggr)\,dt,
	\end{aligned}
\end{equation}
with $\mu_1, \ldots, \mu_5 \ge 0$, where $H(\vek{p}, \vek{q}) = - \sum_i p_i \log(q_i)$ is the cross-entropy between the probability vectors $\vek{p}, \vek{q} \in \Rbb^N$ and $\| \cdot \|_F$ denotes the Frobenius norm of a matrix. Softmax is a standard function turning input into a probability vector required to apply the cross-entropy (explicit definition is in the next paragraph). Note that the first row in the right-hand side of~\eqref{eq:specE} corresponds to the loss function $L(\vx_k(T), \vy_k)$ and the second row to $\Phi(\theta(t), \theta'(t))$ in the general formula \eqref{eq:E} (the term involving $\ell$ is put to zero and is not included).

To match the output via the squared mean distance, set $\mu_1>0$ and $\mu_2=\mu_3=0$. If the matching is done via cross-entropy, then set $\mu_2>0$ and $\mu_1=0$. When using cross-entropy, it is desirable to control the magnitude of the output by $\mu_3>0$ since softmax has the unfortunate property to be invariant under shift along the space diagonal $\vek{d} = (1,1,\ldots,1)^\tran$, i.e.,
\[
  \smax(\vz) := \frac{1}{\sum_i \exp(z_i)} \begin{pmatrix} \exp(z_1)\\ \exp(z_2)\\ \vdots \\ \exp(z_N)\end{pmatrix} = \frac{1}{\sum_i \exp(z_i + c)} \begin{pmatrix} \exp(z_1 + c)\\ \exp(z_2+c)\\ \vdots \\ \exp(z_N + c)\end{pmatrix} = \smax(\vz + c \vek{d})
\]
for every $\vz \in \Rbb^N$ and $c\in \Rbb$.

Considering the NODE~\eqref{eq:specNODE} with the cost functional~\eqref{eq:specE}, we can now give specific formulae for the adjoint problem~\eqref{eq:AP}, the sensitivity problem \eqref{eq:SP}, the direction of steepest descent \eqref{eq:E'-L2} as well as the learning rate $\beta$.

Recall that the activation function $\tanh$ in \eqref{eq:specNODE} is applied component-wise. Therefore, the chain rule yields that the adjoint problem~\eqref{eq:AP} can be formulated as follows:
\begin{equation}
\label{eq:specAP}
	\left\{
	\begin{aligned}
	\lambda_k'(t) & = - W(t)^\tran \Bigl(\sech^2\bigl(W(t) \vx_k(t) + b(t)\bigr) \circ \lambda_k(t)\Bigr), \quad t\in (0,T),\\
	\lambda_k(T) & = \frac{1}{K} \Bigl( \mu_1 \bigl(\vx_k(T)-\vy_k\bigr) + \mu_2 \bigl(\smax(\vx_k(T))-\vy_k\bigr) + \mu_3\, \vx_k(T)\Bigr),
	\end{aligned}
	\right.
\end{equation}
where $\sech^2$ is applied component-wise and $\circ$ denotes the Hadamard (i.e., component-wise) product. Note also that we have made use of the fact that $\vy_k$ is a probability vector when simplifying the derivative of the cross-entropy term in the final value condition.

The terms that penalize $W$ and $b$ and their derivatives in \eqref{eq:specE} may be expressed using the Lebesgue $L^2$ and Sobolev $W^{1,2}$ norms, namely,
\begin{align*}
	& \int_0^T \biggl( \frac{\mu_4}{2} \bigl(\|W(t)\|^2_F + \|b(t)\|^2_{\ell^2}\bigr) + \frac{\mu_5}{2} \bigl(\|W'(t)\|^2_F + \|b'(t)\|^2_{\ell^2}\bigr) \biggr)\,dt \\
	& \quad = \frac{\mu_4 - \mu_5}{2} \Bigl(\| W \|_{L^2(I)}^2 + \| b \|_{L^2(I)}^2\Bigr) + \frac{\mu_5}{2} \Bigl(\| W \|_{W^{1,2}(I)}^2 + \| b \|_{W^{1,2}(I)}^2\Bigr).
\end{align*}
The Fréchet derivative of $E(W,b)$ with respect to the $L^2$ inner product has been stated in Corollary~\ref{cor:L2-derivative} corresponding to the case when $\mu_5 = 0$. Then, the direction of steepest ascent for $E(W,b)$ with respect to the $L^2$-norm is given by
\begin{equation}
\label{eq:L2_sd}
	\left\{
	\begin{aligned}
	dW(t) & = \sum_{k=1}^K \lambda_k(t) \circ \bigl( \sech^2(W(t) \vx_k(t) + b(t)) \vx_k(t)^\tran\bigr) + (\mu_4 - \mu_5) W(t),\\
	db(t) & = \sum_{k=1}^K \lambda_k(t) \circ \bigl( \sech^2(W(t) \vx_k(t) + b(t))\bigr)+ (\mu_4 - \mu_5) b(t).
	\end{aligned}
	\right.
\end{equation}
If $\mu_5 > 0$, then the cost functional $E(W,b)$ is not differentiable in $L^2(I)$. On the other hand, it is differentiable in $W^{1,2}(I)$. By Corollary~\ref{cor:SobolevPairing}, the direction of steepest ascent for $E(W,b)$ with respect to the $W^{1,2}$-norm is then given by
\begin{equation}
\label{eq:W12_sd}
 \delta W(t) = \Scal[dW](t) + \mu_5 W(t) \quad \text{and} \quad
 \delta b(t) = \Scal[db](t) + \mu_5 b(t),
\end{equation}
where $\Scal[\,\cdot\,]$ is the transformation described in Theorem~\ref{thm:L2-W12_convert} applied component-wise to the functions $dW$ and $db$ from \eqref{eq:L2_sd}.

Having determined a direction along which the weights and biases are to be updated, we can use the sensitivity problem to measure the change of the output of the neural network. If $W(t)$ and $b(t)$ are perturbed by $V(t)$ and $a(t)$, respectively, then \eqref{eq:SP} is expressed by
\begin{equation}
\label{eq:specSP}
	\left\{
	\begin{aligned}
		 \vxi_k'(t) & = \sech^2 \bigl( W(t)\vx_k(t)+b(t) \bigr) \circ \bigl(W(t) \vxi_k(t) + V(t) \vx_k(t) + a(t)\bigr), \quad t\in(0,T),\\
		 \vxi_k(0) & = 0.
	\end{aligned}
	\right.
\end{equation}
The auxiliary functional $\widetilde{E}(\beta)$ that approximates $E(W+\beta\mkern 1mu V, b+\beta\mkern 1mu a)$
has derivative
\begin{equation}
\label{eq:specEtilde'}
	\begin{aligned}
	 \widetilde{E}'(\beta) & = \frac{1}{K} \sum_{k=1}^K
	 \biggl\langle \mu_1 \bigl( \vx_k(T) + \beta \vxi_k(T) - \vy_k\bigr) \\
	 & \qquad\qquad\quad + \mu_2 \bigl( \smax(\vx_k(T) + \beta \vxi_k(T)) - \vy_k \bigr) + \mu_3 \vx_k(T), \vxi_k(T)\Bigr\rangle_{\ell^2} \\
	 & \quad + \int_0^T \biggl(\mu_4 \Bigl( \bigl\langle W(t) + \beta V(t), V(t)\bigr\rangle_F + \bigl\langle b(t) + \beta a(t), a(t)\bigr\rangle_{\ell^2} \Bigr) \\
	 & \qquad\qquad\quad + \mu_5 \Bigl( \bigl\langle W'(t) + \beta V'(t), V'(t)\bigr\rangle_F + \bigl\langle b'(t) + \beta a'(t), a'(t)\bigr\rangle_{\ell^2} \Bigr)
	 \biggr) dt,
	\end{aligned}
\end{equation}
where $\langle\cdot, \cdot \rangle_F$ denotes the Frobenius inner product on $\Rbb^{N\times N}$.
The optimal learning rate is found by solving the equation $\widetilde{E}'(\beta) = 0$. Note that this equation is linear in $\beta$ in the case when $\mu_2 = 0$.
\subsection*{Training process}
\addcontentsline{toc}{subsection}{Training process}
At the very beginning of the training, the sought parameters $W_0(t)$ and $b_0(t)$ were initialized to an arbitrary $\Ccal^1$-function of a small Sobolev $W^{1,2}$-norm and a training dataset of 1000 points was generated/loaded.
In each epoch (passes of the entire training dataset), the training data were divided into 10 batches, each containing $50 + 50$ points of the two moons/circles. Each batch was used in 15 iterations of the gradient descent steps as described in Section~\ref{sec:minimization}:

In each iteration, the direct problem~\eqref{eq:specNODE} and then the adjoint problem~\eqref{eq:specAP} were solved. The $L^2$-steepest ascent direction was computed using \eqref{eq:L2_sd}. In the first iteration for a batch, the chosen descent direction was the steepest one, i.e., $V_0(t) = -dW_0(t)$ and $a_0(t)=-db_0(t)$. In the $(j+1)^\text{st}$ iteration, the conjugate gradient coefficient was calculated by
\[
  \gamma_j = \frac{\|dW_j\|_{L^2(I)}^2 + \|db_j\|_{L^2(I)}^2}{\|dW_{j-1}\|_{L^2(I)}^2 + \|db_{j-1}\|_{L^2(I)}^2}
\]
and the descent direction was given by
\[
  V_j(t) = -dW_j(t) + \gamma_j V_{j-1}(t) \quad \text{and}\quad
  a_j(t) = -db_j(t) + \gamma_j a_{j-1}(t).
\]
The sensitivity problem~\eqref{eq:specSP} was solved and the learning rate $\beta_j$ was determined as the zero of $\widetilde{E}'(\beta)$ in \eqref{eq:specEtilde'}. Then, the parameters were updated so that $W_{j+1}(t) = W_j(t) + \beta_j V_j(t)$ and $b_{j+1}(t) = b_j(t) + \beta_j a_j(t)$. Thereafter, the next iteration was started.

In the case when the Sobolev-gradient direction was to be used, then \eqref{eq:L2_sd} in the algorithm above was immediately followed by \eqref{eq:W12_sd}, all instances of $dW$ and $db$ were replaced by $\delta W$ and $\delta b$, respectively, and the conjugate gradient coefficient was calculated using the $W^{1,2}$-norm instead of the $L^2$-norm.

When 15 iterations were finished for a batch, then another batch of $50+50$ training data points within the epoch was taken and used in 15 iterations. When all 10 batches of an epoch were used, then a new epoch with a new partition of the training data into batches was started. The weights and biases $W_{15}$ and $b_{15}$ obtained after one cycle of iterations are used as starting values $W_0$ and $b_0$ for the next cycle of iterations. 
\subsection*{Inference}
\addcontentsline{toc}{subsection}{Inference}
When a given set of data points $\{ \vx_k^0 \in \Rbb^N : k=1,\ldots, K\}$ is to be classified using the neural network (with the learned parameters), it is fed into the network via the initial condition in the NODE~\eqref{eq:specNODE}. The final value $\vx_k(T)$ of the solution to the differential equation is then used to determine the class of the respective point. In our experiments, we used one-hot encoding for the desired output, where the first class was encoded as the vector $(1, 0)^\tran$, whereas the second class was encoded as $(0,1)^\tran$. The $k^{\text{th}}$ point belongs to the first class if the distance from $\vx_k(T)$ to $(1,0)^\tran$ is shorter than the distance to $(0,1)^\tran$. Otherwise, the point belongs to the second class.
\subsection*{Augmentation for the two circles dataset}
\addcontentsline{toc}{subsection}{Augmentation for the two circles dataset}
Consider the transformation that maps a point $\vx^0 \in \Rbb^2$ to the final value $\vx(T) \in \Rbb^2$ of the solution to the NODE~\eqref{eq:specNODE} having $\vx^0$ as its initial condition. As a consequence of Theorem~\ref{thm:existence_uniqueness}, this mapping is an orientation-preserving homeomorphism of the plane. In particular, when this transformation is applied to the circles, then the image of the inner circle stays inside the image of the outer circle. Therefore, the image of certain points of the outer circle lies necessarily closer to $(0,1)^\tran$ and these points are thereby incorrectly classified. This is an inevitable property of the continuous neural network model due to the topology of the dataset and the plane. A possible solution to the topological constraints is to embed the plane in the 3-dimensional space by padding both the input and the desired output by (e.g.) zeroes. When the dimension of the data increases, the dimension of the trainable parameters has to increase accordingly. Hence, $W: [0,T] \to \Rbb^{3\times 3}$ and $b: [0,T] \to \Rbb^3$ are the trainable parameters for the augmented neural differential equation (this type of augmentation method is also considered in~\cite{Dupont}).
\subsection*{Comparison of numerical results of the NCG for various cost functionals}
\addcontentsline{toc}{subsection}{Comparison of numerical results of the NCG for various cost functionals}
For each of the synthetic datasets (two moons, two circles without augmentation, and two circles with augmentation), we trained a NODE-based neural network in 5 epochs using several combinations of terms in the cost functional. We measured the best achieved accuracy for noisy test data as well as for clean test data (the word clean here means that no noise has been applied, see the description of test sets above), and we recorded at which epoch/batch this accuracy was obtained. This means that training for any additional epochs/batches does not lead to improvement in the achieved accuracy (some improvement might be obtained after the $5^\text{th}$ epoch). The results over 10 independent test runs for each combination are collected in Tables~\ref{tbl:AccL2} and~\ref{tbl:AccW12}. Some tests had clustered values with a tail to left, giving a large standard deviation. No accuracies can of course be higher than 100. The decimal part in the epoch count shows the number of batches before completing an epoch, e.g., epoch count 1.6 corresponds to having trained for 1 full epoch and 6 batches (out of 10) of the next epoch.

For the two moons dataset, square mean distance was used to measure the difference between the network output $\vx_k(T)$ and the one-hot encoded output $\vy_k$ and hence $\mu_1 = 1$ while $\mu_2=\mu_3=0$. For the two circles dataset (regardless of whether augmentation is applied or not), cross-entropy was used and hence $\mu_1=0$ while $\mu_2 = 1$ and $\mu_3 = 0.1$.
We made this choice in order to illustrate results for matching the output both by a norm, and by cross-entropy. As explained above, due to the shift invariance along the space diagonal of the cross-entropy, it is advisable to have $\mu_3>0$. 

We tested three types of penalization of the parameters $W$ and $b$ in our experiments, namely, no penalization at all ($\mu_4 = \mu_5 = 0$), $L^2$-penalization ($\mu_4 = 10^{-5}$ and $\mu_5=0$), and $W^{1,2}$-penalization ($\mu_4 = \mu_5 = 10^{-5}$).
The particular choice of the value $10^{-5}$ was decided empirically so that the penalization of parameters does not overpower the terms that measure matching of the output during the iterative optimization process.
\begin{table}[h!]
\caption{\noindent Averages and standard deviations of highest achieved accuracies within 5 epochs over 10 test runs when using $L^2$-steepest descent direction during the optimization.}
\label{tbl:AccL2}
\centering \renewcommand{\arraystretch}{1.15}
\begin{tabular}{c|c|c|c|c|c|}
\multicolumn{2}{c|}{ } & \multicolumn{2}{c|}{No penalization by $\Phi$} & \multicolumn{2}{c|}{$L^2$-penalization}  \\ \hline
Dataset & Testdata & Best accuracy & Epoch & Best accuracy & Epoch \\ \hline\hline
\multirow{2}{*}{\shortstack[c]{Moons\\(2D)}}
 & Clean & $100 $ & $  0.6 \pm 0.3 $ & $ 100 $ & $  0.7 \pm 0.2$  \\ \cline{2-6}
 & Noisy & $100 $ & $  0.9 \pm 0.6 $ & $ 100 $ & $  1.1 \pm 0.9$  \\ \hline
\multirow{2}{*}{\shortstack[c]{Circles\\(2D)}}
 & Clean & $ 98.1 \pm 2.8 $ & $  2.0 \pm 0.7 $ & $  93.0 \pm 12 $ & $  1.4 \pm 1.2$  \\ \cline{2-6}
 & Noisy & $ 97.6 \pm 3.1 $ & $  3.4 \pm 1.3 $ & $  92.6 \pm 12 $ & $  1.6 \pm 1.3$  \\ \hline
\multirow{2}{*}{\shortstack[c]{Circles\\(3D)}}
 & Clean & $100 $ & $  0.4 \pm 0.05 $ & $ 100 $ & $  0.6 \pm 0.3$ \\ \cline{2-6}
 & Noisy & $100 $ & $  0.5 \pm 0.05 $ & $  99.99 \pm 0.03 $ & $  1.1 \pm 1.4$ \\ \hline
\end{tabular}
\end{table}
\begin{table}[h!]
\noindent{\textcolor{gray}{\rule{\textwidth}{0.3mm}}}
\caption{\noindent Averages and standard deviations of highest achieved accuracies within 5 epochs over 10 test runs when using $W^{1,2}$-steepest descent direction. Remark: 100\% accuracy for clean test data for augmented circles was achieved in all test runs within 13 epochs.}
\label{tbl:AccW12}
\centering \renewcommand{\arraystretch}{1.15}
\begin{tabular}{c|c|c|c|c|c|c|c|}
\multicolumn{2}{c|}{ } & \multicolumn{2}{c|}{No penalization by $\Phi$} & \multicolumn{2}{c|}{$L^2$-penalization} & \multicolumn{2}{c|}{$W^{1,2}$-penalization}  \\ \hline
\shortstack[c]{Data\\set} \rule{0pt}{4.5ex} & \shortstack[c]{Test\\data} & \shortstack[c]{Best\\[2pt]accuracy} & Epoch & \shortstack[c]{Best\\[2pt]accuracy} & Epoch & \shortstack[c]{Best\\[2pt]accuracy} & Epoch \\ \hline\hline
\multirow{2}{*}{\shortstack[c]{Moons\\(2D)}}
 & Clean & $100 $ & $  2.7 \pm 0.4 $ & $ 100 $ & $  2.7 \pm 0.5 $ & $ 100 $ & $  2.6 \pm 0.7$  \\ \cline{2-8}
 & Noisy & $100 $ & $  4.0 \pm 0.7 $ & $  99.99 \pm 0.03 $ & $  3.9 \pm 0.7 $ & $  99.9 \pm 0.1 $ & $  3.8 \pm 0.7$  \\ \hline
\multirow{2}{*}{\shortstack[c]{Circles\\(2D)}}
 & Clean & $ 97.2 \pm 2.3 $ & $  2.2 \pm 1.6 $ & $  96.3 \pm 3.2 $ & $  2.8 \pm 1.4 $ & $  96.1 \pm 3.1 $ & $  2.0 \pm 0.8$  \\ \cline{2-8}
 & Noisy & $ 96.3 \pm 2.6 $ & $  2.7 \pm 1.7 $ & $  95.2 \pm 3.4 $ & $  3.2 \pm 1.2 $ & $  95.3 \pm 3.3 $ & $  3.3 \pm 1.4$ \\ \hline
\multirow{2}{*}{\shortstack[c]{Circles\\(3D)}}
 & Clean & $ 99.5 \pm 1.6 $ & $  1.6 \pm 1.2 $ & $  99.3 \pm 2.2 $ & $  1.6 \pm 1.6 $ & $  99.4 \pm 1.6 $ & $  1.8 \pm 1.4$ \\ \cline{2-8}
 & Noisy & $ 99.1 \pm 2.2 $ & $  2.2 \pm 1.6 $ & $  98.9 \pm 2.9 $ & $  2.2 \pm 1.7 $ & $  99.0 \pm 2.6 $ & $  2.4 \pm 1.8$ \\ \hline
\end{tabular}
\end{table}

\clearpage
In Table~\ref{tbl:AccL2}, the steepest descent direction was determined with respect to the $L^2$-pairing, i.e., \eqref{eq:L2_sd} was applied.
In Table~\ref{tbl:AccW12}, the steepest descent direction was given with respect to the $W^{1,2}$-pairing, i.e., \eqref{eq:W12_sd} was used.
Recall that $E(W,b)$ is not differentiable in $L^2$ if $\mu_5>0$, which is why the column corresponding to $W^{1,2}$-penalization is not included in Table~\ref{tbl:AccL2}.

As can be seen from Tables~\ref{tbl:AccL2} and~\ref{tbl:AccW12}, the various penalization terms do not improve or significantly change the results, compared to when no penalization is used. This phenomenon has been observed earlier for conjugate gradient methods, see for example~\cite{Hao,cao3}. When penalization is present, then the iterations can in principle continue without any risk of the parameters blowing up, whilst with no penalization the iterations have to be terminated using an appropriate stopping rule.

It is also apparent from Table~\ref{tbl:AccL2} and Table~\ref{tbl:AccW12} that the accuracy is higher and reached in fewer epochs/batches when the $L^2$-steepest descent direction is used.

To gain further insight into these results, we computed the corresponding Lebesgue $L^2$ and Sobolev $W^{1,2}$-norms of the trainable parameters at the end of the $5^\text{th}$ epoch. Averages and standard deviations of the norms over 10 test runs are recorded in Table~\ref{tbl:L2norms} ($L^2$-norms) and Table~\ref{tbl:W12norms} ($W^{1,2}$-norms).
\begin{table}[h]
\caption{Averages and standard deviations of $(\|W\|_{L^2(I)}^2 + \|b\|_{L^2(I)}^2)^{1/2}$ after 5 epochs over 10 test runs}
\label{tbl:L2norms}
\centering \renewcommand{\arraystretch}{1.15}
\begin{tabular}{c|c|c|c|c|c|}
  & \multicolumn{2}{c|}{No penalization by $\Phi$} & \multicolumn{2}{c|}{$L^2$-penalization} & $W^{1,2}$-pen.  \\ \hline
Dataset & $L^2$-descent & $W^{1,2}$-descent & $L^2$-descent & $W^{1,2}$-descent & $W^{1,2}$-descent \\ \hline\hline
Moons (2D) & $16.7 \pm 3.6 $ & $ 8.15 \pm 0.66 $ & $ 16.3 \pm 4.4 $ & $ 8.06 \pm 0.37 $ & $ 8.13 \pm 0.55$ 
 \\ \hline
Circles (2D) & $46.0 \pm 56.1 $ & $ 15.1 \pm 7.5 $ & $ 355 \pm 236 $ & $ 13.0 \pm 4.6 $ & $ 13.0 \pm 4.6$ 
 \\ \hline
Circles (3D)  & $7.47 \pm 1.53 $ & $ 23.4 \pm 34.2 $ & $ 13.6 \pm 3.4 $ & $ 21.5 \pm 27.9 $ & $ 20.9 \pm 27.5$  \\ \hline
\end{tabular}
\end{table}
\begin{table}[h]
\vskip-2.2ex
\noindent{\textcolor{gray}{\rule{\textwidth}{0.3mm}}}
\vskip1.3ex
\caption{Averages and standard deviations of $(\|W\|_{W^{1,2}(I)}^2 + \|b\|_{W^{1,2}(I)}^2)^{1/2}$ after 5 epochs over 10 test runs}
\label{tbl:W12norms}
\centering \renewcommand{\arraystretch}{1.15}
\begin{tabular}{c|c|c|c|c|c|}
  & \multicolumn{2}{c|}{No penalization by $\Phi$} & \multicolumn{2}{c|}{$L^2$-penalization} & $W^{1,2}$-pen.  \\ \hline
Dataset & $L^2$-descent & $W^{1,2}$-descent & $L^2$-descent & $W^{1,2}$-descent & $W^{1,2}$-descent \\ \hline\hline
Moons (2D) & $247 \pm 97 $ & $ 12.6 \pm 1.3 $ & $ 225 \pm 126 $ & $ 12.5 \pm 1.2 $ & $ 12.5 \pm 1.3$ 
  \\ \hline
Circles (2D) & $2111 \pm 3938 $ & $ 22.2 \pm 10.9 $ & $ 25786 \pm 18003 $ & $ 18.4 \pm 7.3 $ & $ 18.3 \pm 7.1$  \\ \hline
Circles (3D) & $80.6 \pm 35.4 $ & $ 28.2 \pm 38.1 $ & $ 180 \pm 113 $ & $ 26.0 \pm 30.4 $ & $ 25.4 \pm 30.4$ 
 \\ \hline
\end{tabular}
\end{table}

From Tables~\ref{tbl:L2norms} and~\ref{tbl:W12norms}, it can be seen that a penalizing term can in some instances  reduce the norms, and thus decrease the magnitude of the trained parameters and/or their derivative. However, the choice of the gradient descent direction has a much more significant effect. It is evident, in particular in Table~\ref{tbl:W12norms}, that using the $W^{1,2}$-steepest descent direction~\eqref{eq:W12_sd} (and possibly also a $W^{1,2}$-penalty term) greatly reduces the norm of the derivatives $W'(t)$ and $b'(t)$, which means that the oscillations in the learned parameters are eased out. Having smoother parameters improves speed and accuracy of the ODE-solvers in the NCG. Thus, although higher accuracy is reached in fewer epochs with the $L^2$-direction, the oscillations in the parameters slow down each iteration as the ODE-solver needs to repeatedly refine the discretization, making the results with $W^{1,2}$-direction comparable in usage of CPU time.

Unsurprisingly, Tables~\ref{tbl:AccL2} and~\ref{tbl:AccW12} confirm that least accuracy is achieved for the two circles dataset with no augmentation and the learned parameters have usually the highest norms for that dataset as seen in Tables~\ref{tbl:L2norms} and~\ref{tbl:W12norms}. Topology of the Euclidean plane and two concentric circles together with the continuous neural network model prevent successful classification. The inner circle has to stay inside the outer circle when undergoing transformation via the entire NODE, thus the circles cannot be linearly separated. As the class is decided by which of the points $(1,0)$ or $(0,1)$ lies nearest to the output of the network, a linear separation is needed.
\subsection*{Visualization of the results of NCG and comparison with SGD}
\addcontentsline{toc}{subsection}{Visualization of the results of NCG and comparison with SGD}
Each model trained by the proposed NCG was replicated in the Tensorflow machine learning platform using the Keras interface (Python based), in order to enable comparisons with existing standard stochastic gradient descent (SGD) based optimization techniques used in deep learning applications. The chosen reference models consists of 250 layers, which corresponds to the discretization level used in the NODE results. The activation function was scaled by the factor 1/50, in order to have an exact copy of a NODE solved with Euler discretization in the time interval $[0,5]$. All other hyper-parameters were also set as in the NODE optimization, with the same layer definitions, activation functions, loss functions, and batch size used in the training. For training, the SGD optimizer RMSProp~\cite{Hinton2012} was used, with a base learning rate of 0.1. 

For the two moon dataset and the augmented two circles dataset, training was performed over 15 epochs, reaching 100\% training accuracy. For the two circles dataset in 2 dimensions, training was stopped after 30 epochs with a training accuracy of around 98\%.

For visual comparison of the results obtained with the various optimization methods, Figures \ref{fig:moons}--\ref{fig:circles3D} have been generated (i.e., one figure per dataset: two moons, two circles without augmentation, and two circles with augmentation). Each of the three figures is formed as a 3-by-3 matrix of subfigures, where the first column shows results of NCG when using $L^2$-steepest descent direction \eqref{eq:L2_sd}, the middle column shows results of NCG with $W^{1,2}$-steepest descent direction \eqref{eq:W12_sd}, and the last column depicts results for the discrete neural network model trained by SGD. The used penalization $\Phi$ for NCG is listed in the footer of respective column. In each 3-by-3 matrix of subfigures the rows are as follows.
 
The first row contains graphs of all the trainable parameters $\theta(t)$ (consisting of weight matrices $W(t)$ and bias vectors $b(t)$) over the ``time'' interval $I=[0,5]$ for NCG and the corresponding $250$ layers used in SGD.
We point out that the SGD is an optimization method for a discrete model (not based on NODEs), thus the weights and biases are discrete. To further accentuate this, only every third value of the parameters $W$ and $b$ have been included in the graph for SGD so that they would not be misread as continuous curves by accident. 

The middle row shows decision boundaries obtained by respective neural network model. In order to produce the image of the decision boundaries, a grid of data points was generated in the shown rectangular area, with the resolution of 400-by-400 points per unit square. Every grid point was plugged in as input (i.e., as initial condition $\vx_k(0)$) into the neural network whose output (i.e., the final value $\vx_k(5)$) was then used to determine the color shade of the grid point. The difference of $y$- and $x$-coordinates of the output value represent the difference in distance from the points $(1,0)$ and $(0,1)$. Recall that these two points correspond to the two distinct classes as per one-hot encoding. On top of the decision boundaries, $250+250$ points of the noisy test dataset have been plotted and colored according to the ground truth. Note that this amounts to half of the noisy test set.

The final row shows trajectories of $25 + 25$ points of the clean test dataset as their position $\vx_k(t)$ evolve throughout the neural network. In Figures~\ref{fig:moons} and~\ref{fig:circles2D}, the time-component is preserved in the image and the value of $t$ is present as one of the axes. Observe that the trajectories do not cross, as distinct points are guaranteed to never collide in the NODE-model due to uniqueness of solutions of ODEs established in Theorem~\ref{thm:existence_uniqueness}. 
\begin{figure}[hb]
{\footnotesize Graphs of $W(t)$ and $b(t)$:}

\vskip1ex
\noindent
\begin{minipage}[t]{0.329\linewidth}
\includegraphics[width=\linewidth]{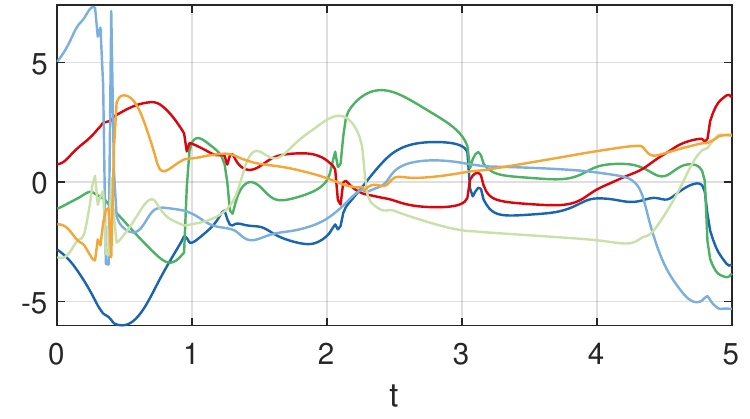}
\end{minipage}
\begin{minipage}[t]{0.329\linewidth}
\includegraphics[width=\linewidth]{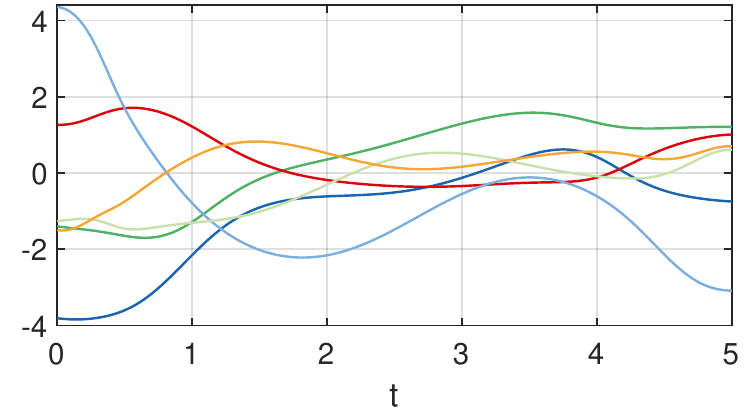}
\end{minipage}
\begin{minipage}[t]{0.329\linewidth}
\includegraphics[width=\linewidth]{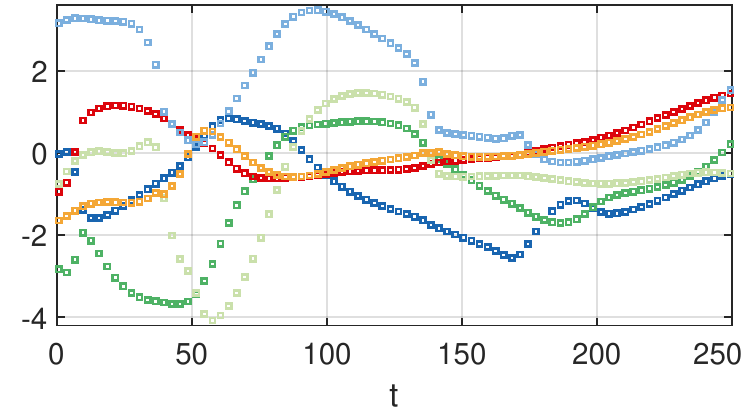}
\end{minipage}

\noindent
\vphantom{.}\hfill
\includegraphics[height=7.6mm]{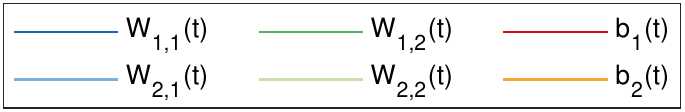}\hfill
\includegraphics[height=7.6mm]{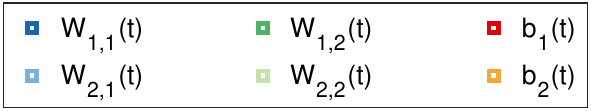}

\vskip3ex
{\footnotesize Decision boundaries:}

\vskip1ex
\noindent
\begin{minipage}[t]{0.329\linewidth}
\includegraphics[width=\linewidth]{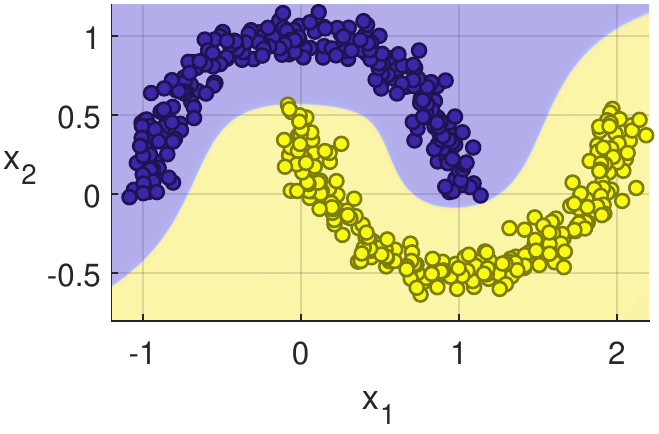}
\end{minipage}
\begin{minipage}[t]{0.329\linewidth}
\includegraphics[width=\linewidth]{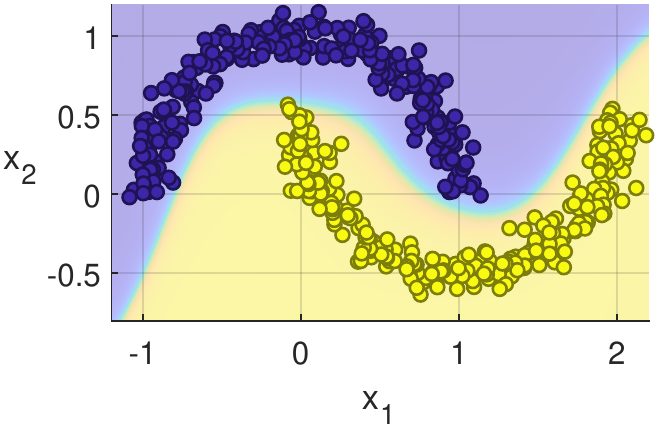}
\end{minipage}
\begin{minipage}[t]{0.329\linewidth}
\includegraphics[width=\linewidth]{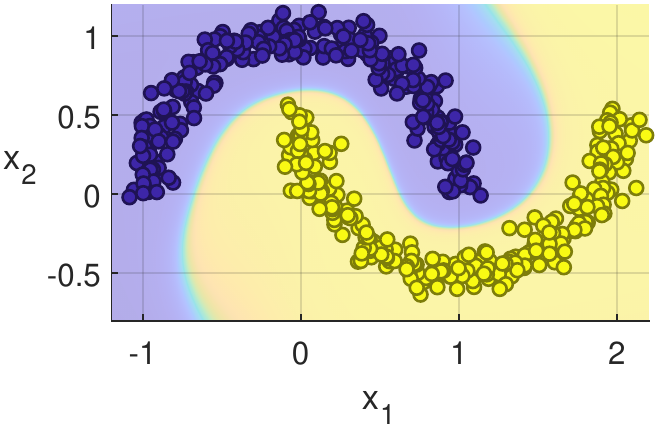}
\end{minipage}

\vskip3ex
{\footnotesize Trajectories:}

\vskip-1ex
\noindent
\begin{minipage}[t]{0.329\linewidth}
\includegraphics[width=2in]{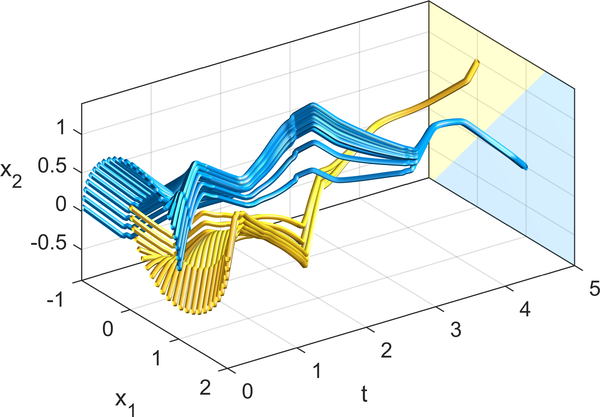}
\end{minipage}
\begin{minipage}[t]{0.329\linewidth}
\includegraphics[width=2in]{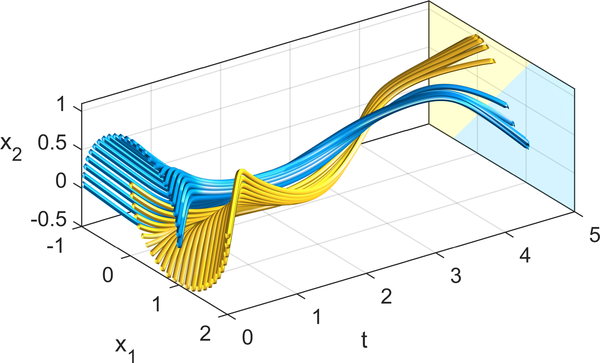}
\end{minipage}
\begin{minipage}[t]{0.329\linewidth}
\includegraphics[width=2in]{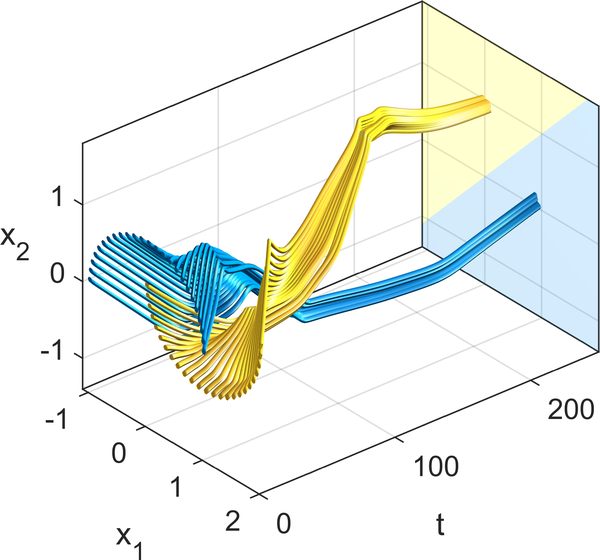}
\end{minipage}

\noindent
\begin{minipage}[t]{0.329\linewidth}
\begin{center}NCG with $L2$-descent

$L^2$-penalization\end{center}
\end{minipage}
\begin{minipage}[t]{0.329\linewidth}
\begin{center}NCG with $W^{1,2}$-descent

$W^{1,2}$-penalization\end{center}
\end{minipage}
\begin{minipage}[t]{0.329\linewidth}
\begin{center}SGD\end{center}
\end{minipage}

\caption{The two moons dataset. NCG depicted after batch 1 of epoch 4 in test run \#9.}
\label{fig:moons}
\end{figure}
\begin{figure}[t!]
{\footnotesize Graphs of $W(t)$ and $b(t)$:}

\vskip0.8ex
\noindent
\begin{minipage}[t]{0.329\linewidth}
\includegraphics[width=\linewidth]{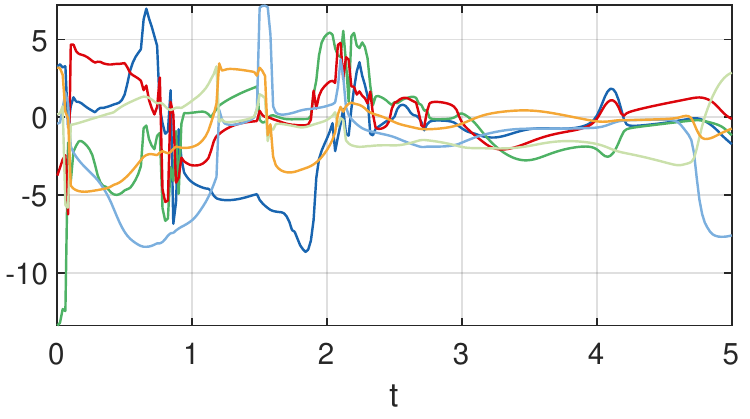}
\end{minipage}
\begin{minipage}[t]{0.329\linewidth}
\includegraphics[width=\linewidth]{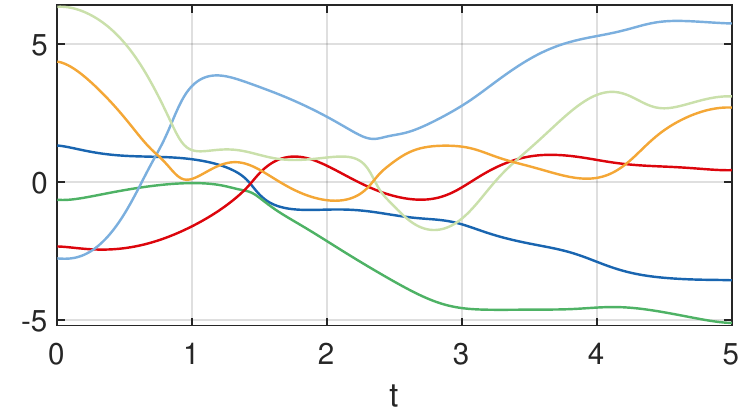}
\end{minipage}
\begin{minipage}[t]{0.329\linewidth}
\includegraphics[width=\linewidth]{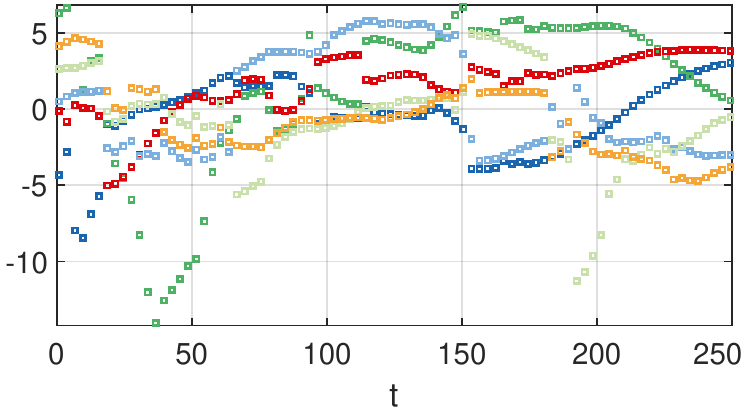}
\end{minipage}

\noindent
\vphantom{.}\hfill
\includegraphics[height=7.6mm]{figs_ncgLeb/2d_legend.pdf}\hfill
\includegraphics[height=7.6mm]{figs_sgd/2d_legend.pdf}

\vskip2ex
{\footnotesize Decision boundaries:}

\vskip0.8ex
\noindent
\begin{minipage}[t]{0.329\linewidth}
\includegraphics[width=\linewidth]{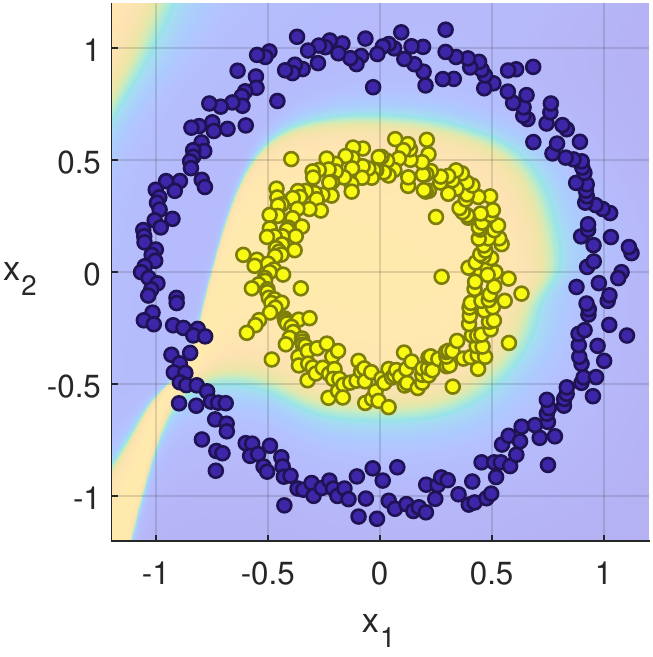}
\end{minipage}
\begin{minipage}[t]{0.329\linewidth}
\includegraphics[width=\linewidth]{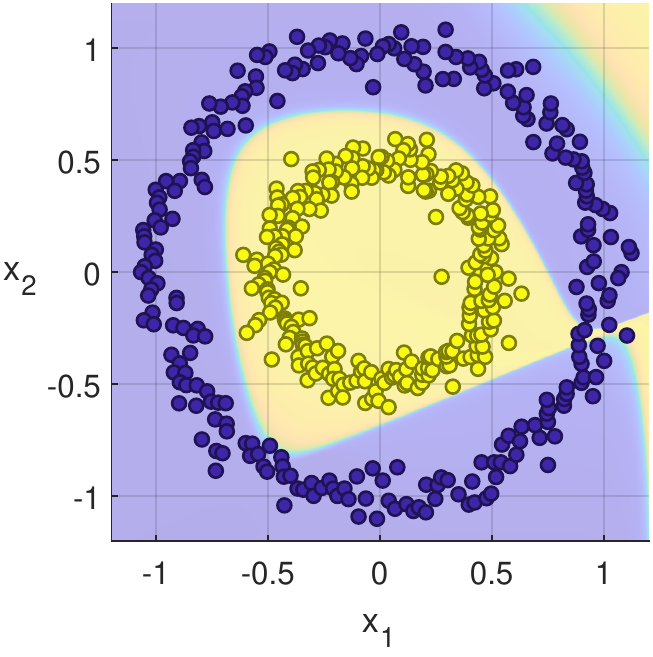}
\end{minipage}
\begin{minipage}[t]{0.329\linewidth}
\includegraphics[width=\linewidth]{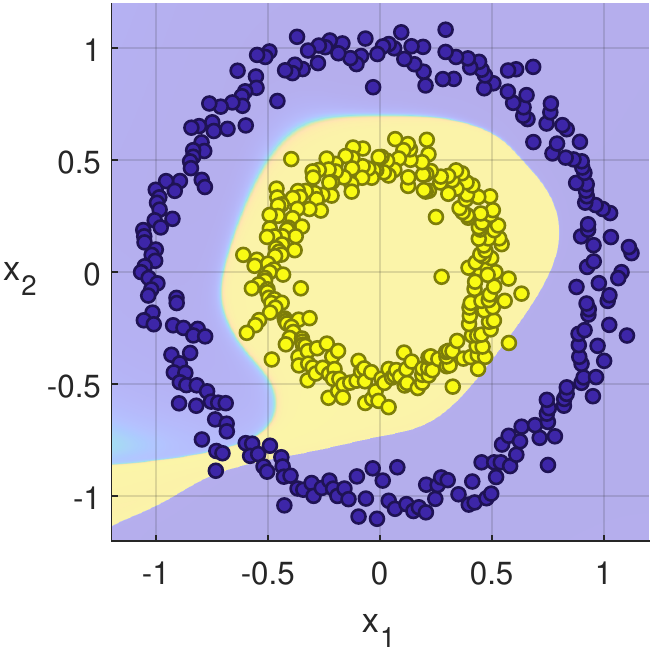}
\end{minipage}

\vskip1.5ex
{\footnotesize Trajectories:}

\noindent
\begin{minipage}[t]{0.329\linewidth}
\includegraphics[width=2in]{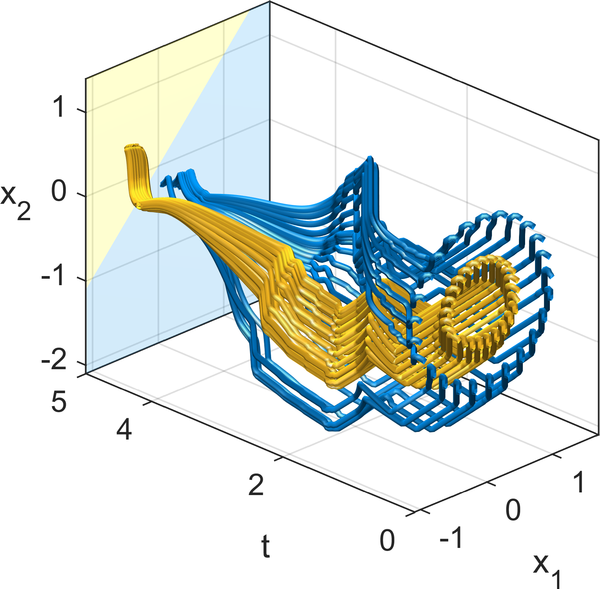}
\end{minipage}
\begin{minipage}[t]{0.329\linewidth}
\includegraphics[width=2in]{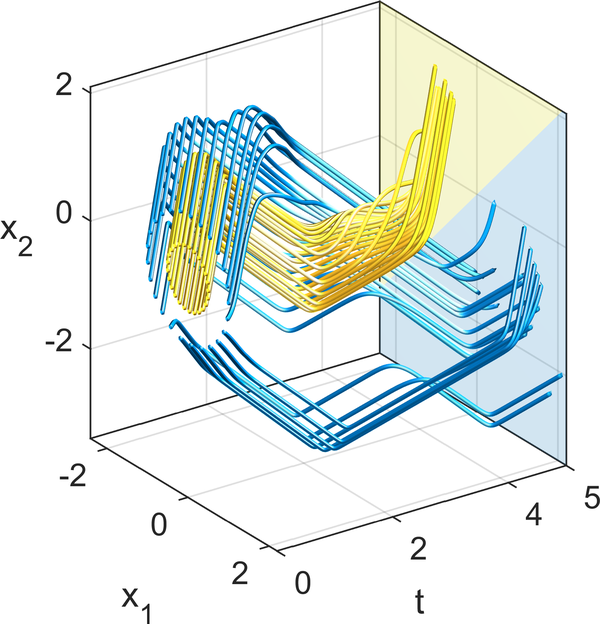}
\end{minipage}
\begin{minipage}[t]{0.329\linewidth}
\includegraphics[width=2in]{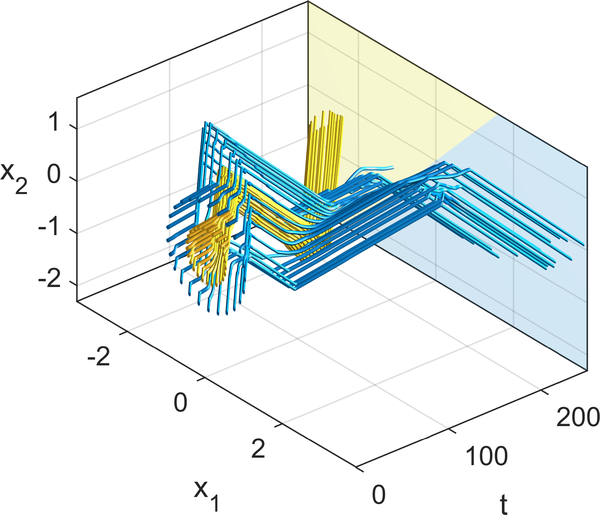}
\end{minipage}

\noindent
\begin{minipage}[t]{0.329\linewidth}
\begin{center}NCG with $L^2$-descent

$L^2$-penalization\end{center}
\end{minipage}
\begin{minipage}[t]{0.329\linewidth}
\begin{center}NCG with $W^{1,2}$-descent

No penalization by $\Phi$\end{center}
\end{minipage}
\begin{minipage}[t]{0.329\linewidth}
\begin{center}SGD\end{center}
\end{minipage}
\caption{The two circles dataset without augmentation. NCG shown after batch 2 of epoch 3 in test run \#2.}
\label{fig:circles2D}
\end{figure}
\begin{figure}[hbt!]
{\footnotesize Graphs of $W(t)$ and $b(t)$:}

\vskip1ex
\noindent
\begin{minipage}[t]{0.329\linewidth}
\includegraphics[width=\linewidth]{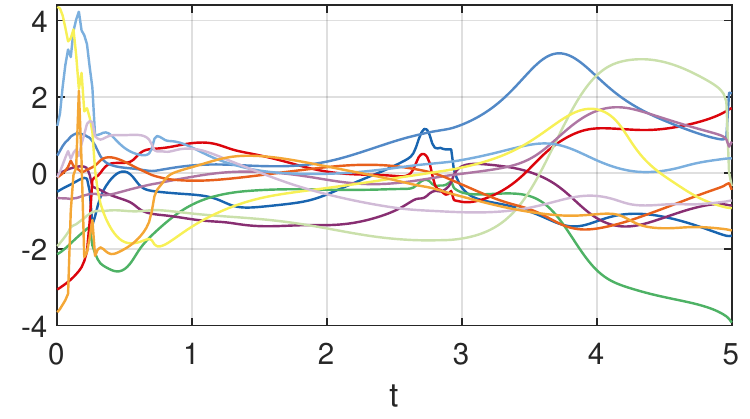}
\end{minipage}
\begin{minipage}[t]{0.329\linewidth}
\includegraphics[width=\linewidth]{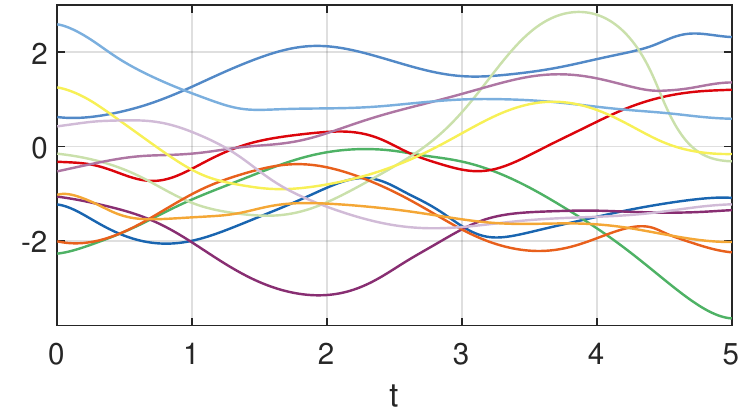}
\end{minipage}
\begin{minipage}[t]{0.329\linewidth}
\includegraphics[width=\linewidth]{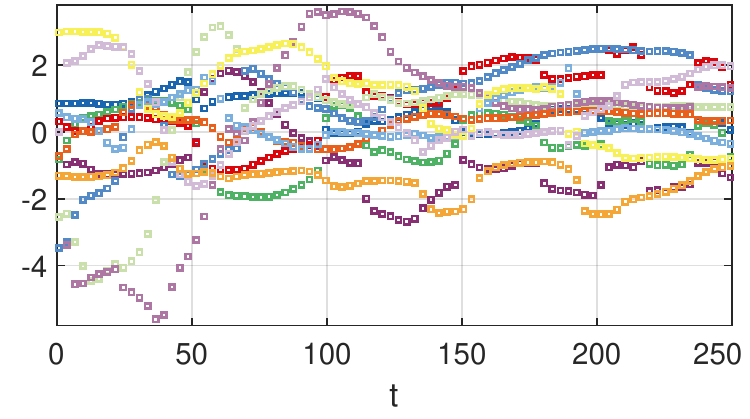}
\end{minipage}

\noindent
\vphantom{.}\hfill
\includegraphics[height=10.9mm]{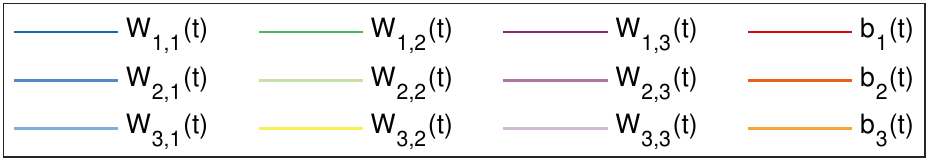}\hfill
\includegraphics[height=10.9mm]{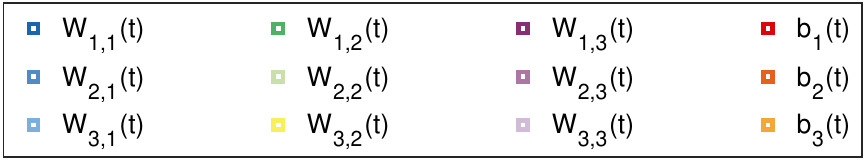}

\vskip3ex
{\footnotesize Decision boundaries:}

\vskip1ex
\noindent
\begin{minipage}[t]{0.329\linewidth}
\includegraphics[width=\linewidth]{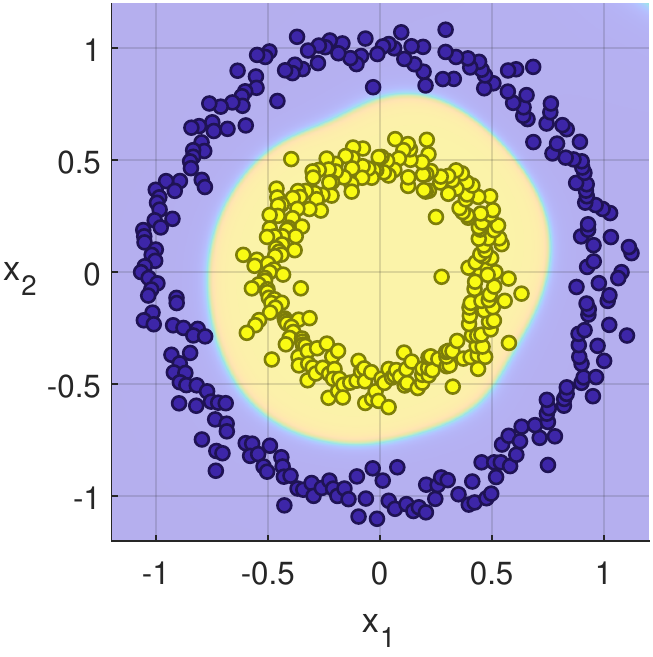}
\end{minipage}
\begin{minipage}[t]{0.329\linewidth}
\includegraphics[width=\linewidth]{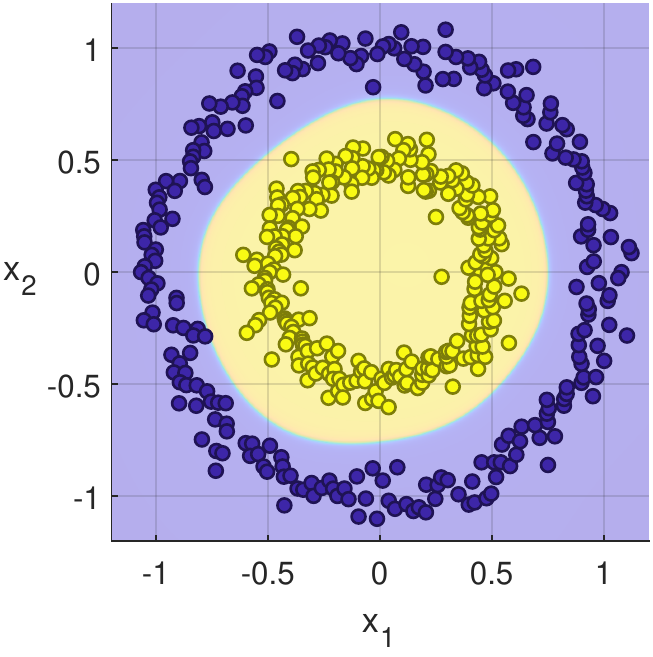}
\end{minipage}
\begin{minipage}[t]{0.329\linewidth}
\includegraphics[width=\linewidth]{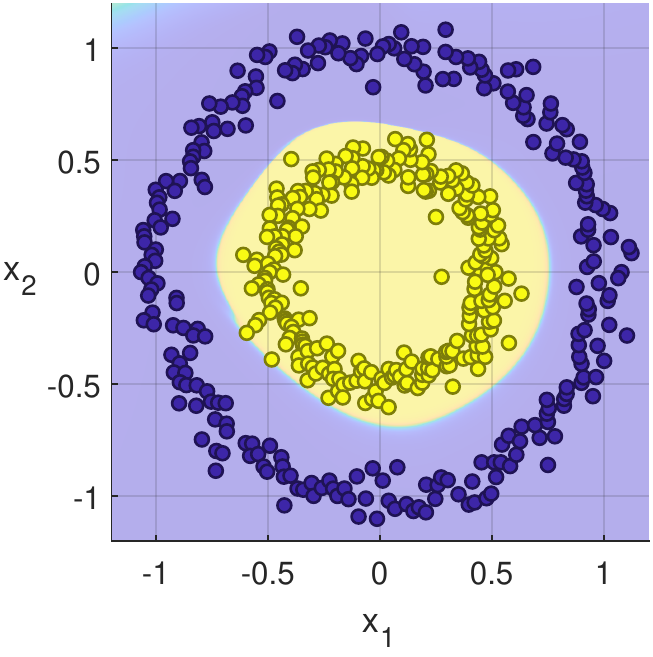}
\end{minipage}

\vskip2ex
{\footnotesize Trajectories:}

\noindent
\begin{minipage}[t]{0.329\linewidth}
\includegraphics[width=2in]{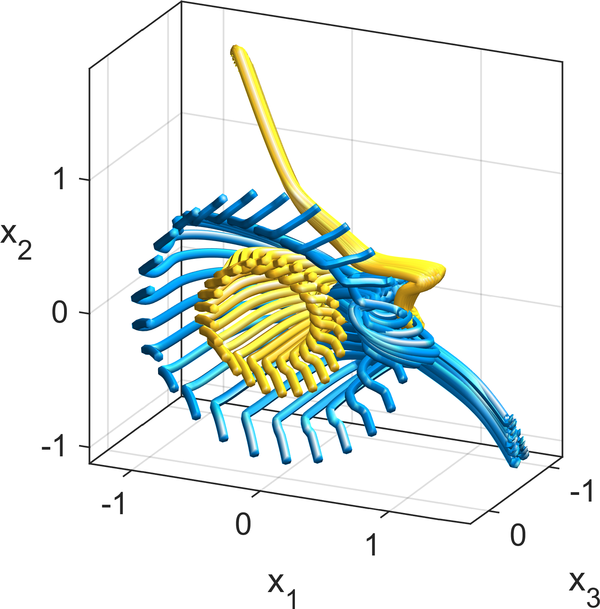}
\end{minipage}
\begin{minipage}[t]{0.329\linewidth}
\includegraphics[width=2in]{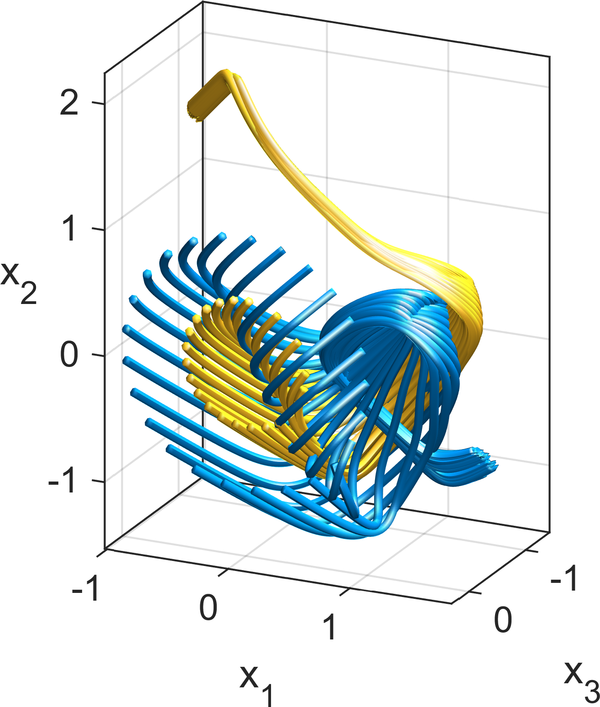}
\end{minipage}
\begin{minipage}[t]{0.329\linewidth}
\includegraphics[width=2in]{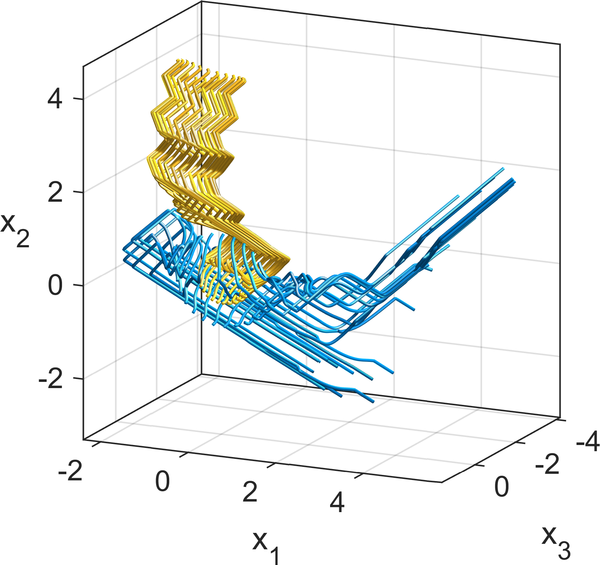}
\end{minipage}

\vskip1ex
\noindent
\begin{minipage}[t]{0.329\linewidth}
\begin{center}NCG with $L^2$-descent

$L^2$-penalization\end{center}
\end{minipage}
\begin{minipage}[t]{0.329\linewidth}
\begin{center}NCG with $W^{1,2}$-descent

$W^{1,2}$-penalization\end{center}
\end{minipage}
\begin{minipage}[t]{0.329\linewidth}
\begin{center}SGD\end{center}
\end{minipage}
\caption{The two circles dataset with augmentation. NCG depicted after epoch 3 in test run \#6.}
\label{fig:circles3D}
\end{figure}

The trajectory data in Figure~\ref{fig:circles3D} are 3-dimensional due to augmentation and hence the time component is not shown in the image. Therefore, the trajectories appear to be crossing. Nevertheless, the points do not collide as they reach the same position at different times $t$.

The effect of using the $W^{1,2}$-descent direction in NCG can be clearly seen in the graphs of the trainable parameters $W(t)$ and $b(t)$ in Figures \ref{fig:moons}--\ref{fig:circles3D}. Oscillations that are present when the $L^2$-direction is used are significantly attenuated. The corresponding effect can also be observed for the trajectories as they vary in a smoother fashion and the motion of the points is free of sharp turns due to diminished acceleration and twists.

Comparing Figures~\ref{fig:circles2D} and~\ref{fig:circles3D}, we see that augmentation has a profound effect for the two circles dataset (as has been pointed out above). In particular, the thin passage through the outer circle that connects the inner region to the outer parts of the plane disappears when the dimension is increased by augmentation. Namely, the additional dimension makes it possible for all the points of the inner region to follow a path towards the point $(0,1,0)$ without ever colliding with any of the points of the outer circle that are following a path towards $(1,0,0)$.

The obtained results compare well with the corresponding ones obtained with the SGD. Note that the weights obtained with SGD as well as the trajectories are oscillatory. Moreover, the continuous setting of NCG (before numerical discretization) corresponds to an infinite number of layers as opposed to the 250 layers employed in the SGD.

An additional benefit of the continuous NODE based model is that, due to well-posedness, it is rather easy to run the model backwards and forecast which region in the plane will map to the respective output class.

\section{Conclusions}
\label{sec:conclusions}
The inverse problem of finding depth-variable (time-dependent) parameters in a neural ordinary differential equation (NODE) has been investigated, corresponding to supervised reconstruction in a neural network with time continuous layers (an infinite number of layers). The NODE is considered in isolation, that is describing the full network where both the input and output are solely processed by the NODE and not by other layers outside this formulation, as opposed to previous works. The inverse problem is recast in the form of minimization of a cost functional, involving loss terms and penalty terms, subject to a general first order differential equation. A nonlinear conjugate gradient method (NCG) is developed for finding the minimum. The adjoint and sensitivity problems are derived, as well as the Fréchet derivative of the cost functional. The Sobolev gradient is also derived and incorporated to enhance smoothness of the reconstructed parameters. Mathematical properties such as parameter dependence of the differential equation are shown, and existence of a minimum to the cost functional is outlined. Numerical results are given for a particular NODE and two synthetic datasets, a two moon one and one corresponding to two full circles. In the examples, the sought parameters are a vectorization of a weight matrix and a bias vector, and supplied with an activation function. Comparison with standard  SGD based methods (discrete, not based on NODEs) show that NODEs in isolation perform well having additional advantages such as smoothness and stability, and being able to forecast forward in time as well as reversing to find the cause of an observed effect. As verified in the numerical examples, using the Sobolev gradient instead of the standard $L^2$ gradient has a marked effect on the results. It renders smoother parameters and trajectories, which in turn speeds up the solution of the requested NODEs in the NCG. There are recent results on training to also automatically find values of parameters for the penalty terms, see~\cite{Alberti}, that would be interesting to utilize. Investigating the sensitivity problem further has the potential to render qualitative statements about the behavior of error propagation in the learning process under noise.  The results presented are general covering a broad class of NODEs and cost functionals, and can therefore be applied to parameter identification in ODEs also outside the context of deep learning. It opens the possibility to turn other discrete optimization methods into continuous minimization subject to ODEs in the spirit of what has been initiated in~\cite{Ross}. The usefulness of learning an ODE for function approximation has been observed in works on time-series and medical forecasting~\cite{Fompeyrine}. Speeding up the calculations via the Sobolev gradient can further enhance the use of NODEs in applications.

\section*{Acknowledgments}
\addcontentsline{toc}{section}{Acknowledgments}
This work was supported by the LiU Cancer network at Linköping University, the research environment ELLIIT, and Linköping University Center for Industrial Information Technology (CENIIT).

\pdfbookmark[1]{References}{sec:refs}

\end{document}